\documentclass{article}

\PassOptionsToPackage{numbers,compress}{natbib}
\usepackage[preprint]{neurips_2026}

\usepackage[utf8]{inputenc}
\usepackage[T1]{fontenc}
\usepackage{hyperref}
\usepackage{url}
\usepackage{booktabs}
\usepackage{amsfonts}
\usepackage{amssymb}
\usepackage{amsmath}
\usepackage{amsthm}
\usepackage{bm}
\usepackage{nicefrac}
\usepackage{microtype}
\usepackage{xcolor}
\usepackage{graphicx}
\usepackage{enumerate}
\usepackage{wrapfig}
\usepackage{subcaption}
\usepackage[linesnumbered,ruled,lined,boxed,commentsnumbered,vlined]{algorithm2e}

\newtheorem{theorem}{Theorem}[section]

\title{Stabilizing the Dynamic Low-Rank Training}

\author{%
  Zhonghan Xu \quad Ling Wang \quad Junhao Chen \quad Jianwei Zhao \quad Jinwei Yang \\
  School of Communications and Information Engineering \\
  University of Electronic Science and Technology of China \\
}

\begin{document}

\maketitle

\begin{abstract}
Training neural networks directly in a low-rank parameterization is an appealing route to reducing memory, compute, and storage simultaneously during both training and inference. Dynamic low-rank training (DLRT), which confines weights to a rank-$r$ manifold via the Galerkin projection of the gradient flow, is particularly attractive because it identifies efficient subnetworks on the fly without specialized initialization or post-factorization. However, DLRT fails to find trainable networks under high compression. In this paper, we derive the gradient flow of the best rank-$r$ approximation and point out that the offset of DLRT comes from a curvature-coupling term which is large and thus non-negligible under aggressive compression. Guided by this analysis, we propose a stable dynamic low-rank training method, named SDLRT, which maintains a lightweight compensation buffer that reinjects the top neglected singular directions. Additionally, we introduce a negative feedback on the truncation tolerance to stabilize each layer's rank. Experimentally, SDLRT reliably finds trainable subnetworks where DLRT collapses and as a PEFT adapter on DeBERTa-v3, it achieves the best average score on SuperGLUE at only $2.8\%$ parameter overhead over LoRA.
\end{abstract}

\section{Introduction}
\label{sec:intro}

Modern neural networks deliver state-of-the-art performance across a wide range of applications, but their ever-growing parameter counts make deployment on memory- and compute-constrained devices increasingly difficult. A natural remedy is to exploit the empirical observation that the weight matrices of over-parameterized networks are approximately low-rank, and to train the network directly in a low-rank parameterization so that memory, compute, and storage are reduced simultaneously in both the training and inference phases~\cite{gural2021lowrank}.

Among existing low-rank training methods, dynamic low-rank training (DLRT) proposed by~\cite{schotthofer2022low} is particularly attractive: it constrains each weight matrix to a low-rank manifold via the Galerkin condition and updates only the three Singular Value Decomposition (SVD) factors $(U, S, V)$, thereby avoiding specialized initialization and the costly post-factorization step. Unlike iterative magnitude pruning~\cite{frankle2019lottery} or Neural Architecture Search-based compression pipelines~\cite{you2022supertickets}, DLRT identifies low-rank ``winning tickets'' on the fly, with resource usage scaling linearly in the current rank rather than in the ambient dimension.

\begin{figure}[t]
  \centering
  \includegraphics[width=\linewidth]{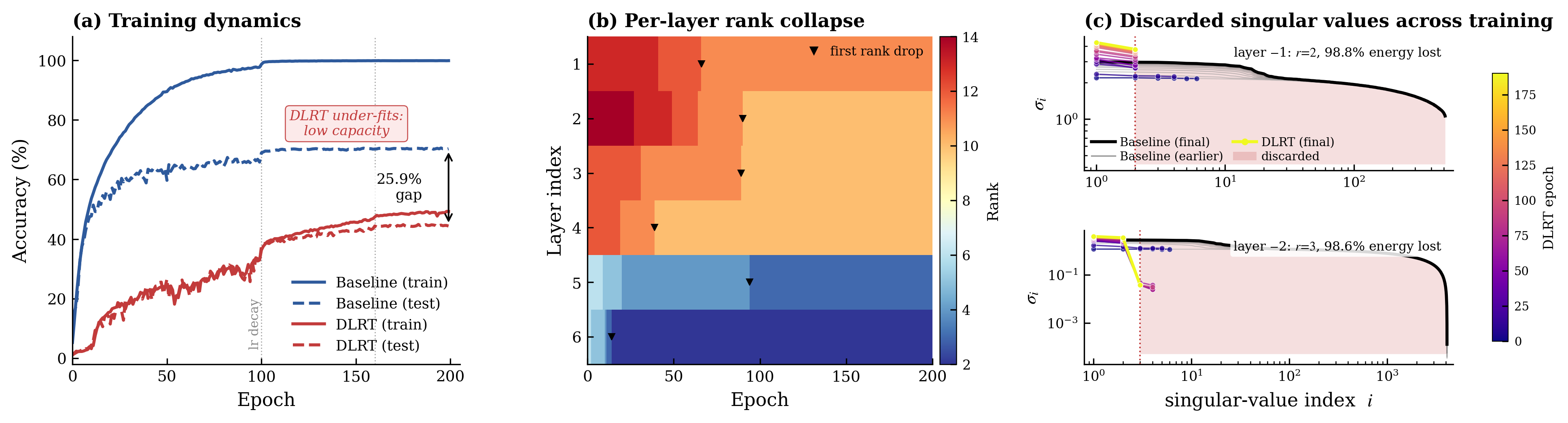}
  \caption{\textbf{DLRT collapses under aggressive compression on VGG-19/CIFAR-100 ($\approx 69.6\%$ compression).} (a) Accuracy curves: DLRT lags the baseline by $25.9\%$ in test accuracy; training accuracy also plateaus far below baseline, indicating insufficient capacity rather than over-fitting. (b) Per-layer rank evolution; $\blacktriangledown$ marks the first epoch each layer hits its minimum rank---every layer collapses within the first 100 epochs. (c) Singular-value spectra of the last two weight matrices across epochs $\{0,10,\dots,190\}$. The red shaded tail beyond $i>r$ is the energy that DLRT permanently discards: $98.8\%/98.6\%$ of the total $\|W\|_F^2$ for the two layers.}
  \label{fig_motivation}
\end{figure}

However, we observe that DLRT becomes unreliable precisely where compression matters most. As illustrated in Figure~\ref{fig_motivation}, on VGG-19/CIFAR-100 with an about $69.6\%$ initial compression, DLRT fails to find a trainable subnetwork: ranks contract rapidly in the early epochs and the truncated singular-value tail continues to carry the majority of each layer's energy throughout training. This echoes the expansion-limitation phenomenon reported by~\cite{frankle2020linear}, and here it takes the form of a self-reinforcing loop in which reduced expressive capacity drives ranks downward, and lower ranks further erode expressive capacity.

Fortunately, some scholars have pointed out that whether a trainable subnetwork can be found is related to the stability of the sub-network~\cite{frankle2019stabilizing}. We argue that the low-rank instability is not a mere optimization artifact but a direct consequence of the gradient flow based on dynamic low-rank approximation (DLRA)~\cite{koch2007dynamical}. Inspired by the analysis of dynamic orthogonal approximation in~\cite{koch2007dynamical}, we derive the gradient flow satisfied by the best rank-$r$ approximation of the full-parameter weights and compare it with DLRA. The comparison reveals two distinct sources of error: \emph{gradient locality}---DLRA evaluates the gradient at the low-rank neighbour rather than at the full-parameter point, which is a price we must pay for staying on the manifold; and \emph{curvature coupling}---DLRA entirely drops a correction term in which the discarded singular components $(\sigma_{r+j}, u_{r+j}, v_{r+j})$ interact with the retained ones $(\sigma_i,u_i,v_i)$ through a factor proportional to $\sigma_{r+j}/(\sigma_i^{2}-\sigma_{r+j}^{2})$. Under aggressive compression the spectral gap $\sigma_i^{2}-\sigma_{r+j}^{2}$ is small, so the coupling is large and its omission is the dominant source of the instability observed in Figure~\ref{fig_motivation}.

Guided by this analysis, we propose \textbf{Stabilized Dynamic Low-Rank Training (SDLRT)}. Rather than raising the rank, which is expensive, SDLRT keeps a lightweight \emph{compensation buffer} $(U_{\mathrm{neg}}, V_{\mathrm{neg}})$ that stores the top neglected singular directions from the previous iteration and reinjects them when updating the small factor $S$, partially recovering the curvature-coupling term missing from DLRT. A negative feedback mechanism on the singular-value tolerance $\vartheta$ further stabilizes the rank of each layer. Our main contributions are summarized as follows:
\begin{enumerate}[1.]
\item We derive the gradient flow of the best rank-$r$ approximation of the full-parameter weights in closed SVD form (Theorem~\ref{thm:best-lowrank}) and identify the curvature-coupling term whose omission makes DLRA unstable under high compression.
\item Built on this insight, we propose a stabilized dynamic low-rank training method, which augments the basis-extension step with the compensation buffer and stabilizes the adapted rank through a negative feedback on the truncation tolerance $\vartheta$. Additionally, we provide an exactness guarantee (Theorem~\ref{thm:exactness}) and an error bound (Theorem~\ref{thm:error}), confirming that the buffer does not alter the starting point and that convergence is preserved.
\item Extensive experiments across computer vision tasks and natural language processing show that SDLRT reliably identifies trainable low-rank subnetworks under aggressive compression, and significantly improves performance on natural language understanding.
\end{enumerate}

\section{Related Work}
\label{sec:related}
\textbf{Model compression.} Pruning is the dominant route to reducing the memory and compute footprint of deep networks. Unstructured pruning attains high sparsity but yields limited speedup without specialized kernels~\cite{han2016deep,frantar2023sparsegpt,sun2024wanda}, whereas structured pruning removes entire channels, filters, or attention heads at the cost of sharper accuracy degradation under aggressive compression~\cite{fang2023depgraph,ma2023llmpruner,ashkboos2024slicegpt,he2024structured}. A more fundamental bottleneck is the \emph{pretrain--prune--retrain} pipeline itself: its iterative cost scales poorly with model size, and the final retraining step often fails to recover accuracy once the compression ratio exceeds a task-dependent threshold~\cite{liu2019rethinking,renda2020comparing,cheng2024survey}. Low-rank factorization offers a complementary view by pruning redundant directions rather than individual entries~\cite{li2023losparse,zhang2023loraprune,idelbayev2020lowrank}. Yet applying post-SVD still requires a fully trained dense model and decouples the rank decision.

\textbf{Lottery ticket hypothesis.} Some scholars conjectured that dense networks contain sparse ``winning tickets'' which, trained in isolation, match the accuracy of the full model~\cite{frankle2019lottery}. Subsequent work stabilized the hypothesis at scale through weight and learning-rate rewinding~\cite{renda2020comparing,frankle2020linear}, established transferability across datasets and tasks~\cite{morcos2019one,chen2021lottery_cv,chen2020lottery_bert}, and proved strong-lottery variants in which subnetworks are uncovered in randomly initialized networks without any weight update~\cite{ramanujan2020whats,malach2020proving,pensia2020optimal,dacunha2022proving}. The hypothesis has since been extended to GNNs~\cite{chen2021unified,hui2023rethinking}, GANs and diffusion models~\cite{chen2021gans,jiang2023successfully}, and pretrained language models~\cite{chen2020lottery_bert,liu2024survey_lth}. Nevertheless, identifying a winning ticket remains the central difficulty: iterative magnitude pruning demands many train--prune cycles~\cite{paul2023unmasking,zhang2021validating}, and sanity checks reveal that the outcome is highly sensitive to learning rate, training length, and residual-connection topology~\cite{ma2021sanity,burkholz2022convolutional}. A parallel line pursues low-rank winning tickets in spectral space~\cite{schotthofer2022low}; however, as we show in Section~\ref{sec:derivation}, the standard dynamic low-rank procedure used to uncover them collapses precisely when compression matters most.

\textbf{Dynamic low-rank approximation.} Dynamic low-rank approximation (DLRA) restricts optimization to the manifold of rank-$r$ matrices via a Galerkin projection of the gradient flow~\cite{koch2007dynamical}, with robust numerical integrators developed in~\cite{ceruti2022unconventional,ceruti2022rankadaptive}. Building on these tools, DLRT~\cite{schotthofer2022low} and its rank-adaptive extension~\cite{zangrando2024rankadaptive} update only the SVD factors $(U, S, V)$ during training, eliminating both specialized initialization and post-hoc factorization. Despite this progress, a shared weakness persists: at every step these methods \emph{discard the truncated spectral tail}, implicitly treating small singular components as irrelevant. In Section~\ref{sec:derivation} we show that this omission drops a curvature-coupling term under aggressive compression and drives the rank collapse as shown in Figure~\ref{fig_motivation}. Our method directly targets this neglected interaction via a lightweight compensation buffer, at negligible additional cost.

\section{Method}

\subsection{Low-rank Training}

Since over-parameterized models tend to have low-rank property~\cite{arora2019implicit,galanti2024sgd}, low-rank methods are commonly used strategies to reduce memory storage and computational costs during the training and inference phases of deep learning models. As shown in Figure~\ref{fig_procedure}, low-rank training based on SVD~\cite{yang2020learning,zhang2023adalora} parameterizes the weights of each layer $W$ as $W=USV^{\top}$, where $U$ and $V$ are orthogonal matrices with $r$ columns, and $S$ is a small matrix of size $r\times r$. During model training or inference, the update of the weight matrix is transformed into updates of the low-rank factors $U, V, S$, which can significantly save resources. However, unconstrained alternating or direct training may cause large oscillations in the training loss and slower convergence~\cite{khodak2021init,razin2022implicit}. It is known that the maximum curvature (after normalizing the normal vector) of the point on the low-rank manifold corresponds to the inverse of the smallest singular value~\cite{feppon2018geometric}. Therefore, when the smallest singular value approaches zero, the curvature at that point diverges. To ensure the accuracy of the approximation, the largest singular value that is ignored must be very small, meaning that the remained smallest singular value should not be too large. To mitigate this phenomenon, most researchers address it from the aspect: the constraint of the training process~\cite{zhao2024galore,schotthofer2022low,savostianova2023robust}.

% Training based on dynamic orthogonal approximation (DOA) or dynamic low-rank approximation (DLRA) is a popular method for low-rank training. 
Given the loss function $\mathcal{L}$, we assume that the training phase is considered as a continuous process that changes over time, where the weights of the full-parameter network are denoted as $W_f(t)$, and their time-derivative is:
\begin{equation}
W_f(t) = W_f(0)+\int_0^t\dot W_f(s)\,ds,\quad\text{where}\quad
\dot W_f(s)=-\nabla_{W_f} \mathcal{L}(W_f(s)).
\label{equ:full_net}
\end{equation}
To constrain the full-parameter weights onto a low-rank manifold, DLRA projects the initial point onto a rank-$r$ manifold and subsequently projects the gradients onto the tangent plane of the manifold:
\begin{equation}
% \left\{
% \begin{array}{cl}
% \dot{W}(t) & =\Pi_{\mathcal{T}(W)}(\mathcal{L}(W(t))) \\
% W(0) & =\Pi_{\mathcal{M}_r}(W_f(0)),
% \end{array}\right.
\dot{W}(t) =\Pi_{\mathcal{T}(W)}(-\nabla_W\mathcal{L}(W(t)))\quad \text{and}\quad W(0) =\Pi_{\mathcal{M}_r}(W_f(0)),
\end{equation}
where $W(t)$ denotes the low-rank weights projected by DLRA, $\mathcal{M}_r$ denotes the manifold of rank-$r$ matrices, $\mathcal{T}(W)$ is the tangent space at $W\in \mathcal{M}_r$, $\Pi_{\mathcal{T}(W)}$ is the orthogonal projection onto the tangent space $\mathcal{T}(W)$, and $\Pi_{\mathcal{M}_r}$ is the orthogonal projection onto the manifold $\mathcal{M}_r$, i.e., the map that keeps the first $r$ singular values and sets the rest to zero.

Specifically, assuming the low-rank network weights are parameterized as $W(t)=USV^{\top}\in \mathcal{M}_r$, the optimization problem we consider is formulated as follows:
\begin{equation}
\begin{array}{cl}
\min & \| \dot W(t)+\nabla_W \mathcal{L}(W(t))\|_F \\
\text{s.t.} & W(t)=USV^{\top}\in \mathcal{M}_r \\
& \dot W(t)=\dot U S V^{\top}+U\dot S V^{\top}+US\dot V^{\top}\in \mathcal{T}(W(t)).
\end{array}
\end{equation}
Then, using the Gauge conditions $U^{\top}\dot U=0$ and $V^{\top}\dot V=0$ as presented in~\cite{koch2007dynamical}, we obtain the gradient flow equation for the low-rank factors:
\begin{equation}
\begin{cases}
\dot{S}=-U^{\top}\nabla_{W}\mathcal{L}(W(t))V, \\
\dot{U}=-P^{\perp}_U\nabla_{W}\mathcal{L}(W(t))VS^{-1}, \\
\dot{V}=-P^{\perp}_V\nabla_{W}\mathcal{L}(W(t))^{\top}US^{-\top},
\end{cases}
\label{equ:dlra}
\end{equation}
where $P^{\perp}_U=(I-UU^{\top})$ and $P^{\perp}_V=(I-VV^{\top})$ are used to project the original matrix onto the spaces orthogonal to the span of $U$ and $V$, respectively.

However, we observe that at higher compression rates, the low-rank networks trained based on Equation~\eqref{equ:dlra} are highly unstable as shown in Figure~\ref{fig_motivation}, with potentially low expressive power, making it difficult to identify trainable ``winning tickets'' in the network. In Figure~\ref{fig_motivation}, we plot the performance of VGG-19 trained with the DLRT method compared to the baseline under an initial compression rate of $69.6\%$, as well as the rank variations across multiple layers within the network. The instability of the network leads to unstable ranks of the network weights trained based on Equation~\eqref{equ:dlra}. This rank instability, in turn, negatively impacts the expressive power of the network, further exacerbating its instability and creating a vicious cycle.

In fact, the weights obtained through training with Equation~\eqref{equ:dlra} still exhibit certain offset compared to the best low-rank approximation of the original weights derived from full-parameter gradient training.

\subsection{Gradient Flow of the Best Low-Rank Approximation}
\label{sec:derivation}

To directly characterize the gap between DLRA and the best low-rank trajectory, we now derive the gradient flow equations satisfied by the best rank-$r$ approximation of the full-parameter weight $W_f(t)$. The derivation makes explicit which information is discarded by DLRA and motivates the compensation mechanism introduced in Section~\ref{sec:method}.

Throughout this subsection, we write $\mathcal{G} \triangleq \nabla_W \mathcal{L}(W_f(t))$ for brevity and recall that $\dot{W}_f = -\mathcal{G}$.

\textbf{Setup and notation.} Let the SVD of the full-rank weight matrix be $W_f = \sum_{i=1}^{r+k} \sigma_i u_i v_i^\top$, partitioned as
\begin{equation}
W_f = U\,\Sigma_r V^\top + U_\perp\Sigma_\perp V_\perp^\top,
\end{equation}
where $U=[u_1,\dots,u_r]$, $U_\perp=[u_{r+1},\dots,u_{r+k}]$, $V=[v_1,\dots,v_r]$, $V_\perp=[v_{r+1},\dots,v_{r+k}]$, $\Sigma_r=\mathrm{diag}(\sigma_1,\dots,\sigma_r)$, and $\Sigma_\perp=\mathrm{diag}(\sigma_{r+1},\dots,\sigma_{r+k})$. The best rank-$r$ approximation is $USV^\top$ with $S=\Sigma_r$. We adopt the gauge conditions $U^\top \dot U = 0$ and $V^\top \dot V = 0$, which guarantee that $U^\top U = I_r$ and $V^\top V = I_r$ are preserved along the flow. The following orthogonality relations will be used repeatedly:
\begin{equation}
W_f V = U\Sigma_r,\quad W_f^\top U = V\Sigma_r,\quad V_\perp^\top V = 0,\quad U_\perp^\top U = 0.
\label{eq:ortho}
\end{equation}

\textbf{Deriving the coupled system.} The optimality of the rank-$r$ truncation requires $(I-UU^\top)W_f V = 0$ and $(I-VV^\top)W_f^\top U = 0$ at all times. Differentiating the first constraint and using \eqref{eq:ortho}:
\begin{equation}
-\dot U \underbrace{U^\top W_f V}_{=\Sigma_r}
-U\underbrace{\dot U^\top W_f V}_{=\dot U^\top U\Sigma_r=0} + P_U^\perp \dot W_f V + P_U^\perp W_f \dot V = 0,
\end{equation}
where $P_U^\perp = I - UU^\top$. Since $P_U^\perp W_f (I - VV^\top) = U_\perp \Sigma_\perp V_\perp^\top$ and $\dot V = (I - VV^\top)\dot V$ by the gauge conditions, we obtain
\begin{equation}
\dot U\Sigma_r = -P_U^\perp\mathcal{G}V + U_\perp\Sigma_\perp V_\perp^\top\dot V.
\label{eq:star}
\end{equation}
By symmetry, differentiating $(I-VV^\top)W_f^\top U = 0$ yields
\begin{equation}
\dot V\Sigma_r = -P_V^\perp\mathcal{G}^\top U + V_\perp\Sigma_\perp U_\perp^\top\dot U.
\label{eq:starstar}
\end{equation}
Equations~\eqref{eq:star} and~\eqref{eq:starstar} are coupled through the off-diagonal block $\Sigma_\perp$, which encodes the very interaction between retained and discarded components that DLRA neglects.

\textbf{Expansion in the singular-vector basis.} By the gauge conditions, $\dot U \in \mathrm{col}(U)^\perp$ and $\dot V \in \mathrm{col}(V)^\perp$. We may therefore expand $\dot U = U_\perp\,\mathcal{C}$ and $\dot V = V_\perp\,\mathcal{D}$, with coefficient matrices $\mathcal{C},\mathcal{D}\in\mathbb{R}^{k\times r}$. Substituting into~\eqref{eq:star} and multiplying to the left by $U_\perp^\top$ gives
\begin{equation}
\mathcal{C}\Sigma_r = -U_\perp^\top\mathcal{G}V + \Sigma_\perp\mathcal{D},
\label{eq:C}
\end{equation}
while~\eqref{eq:starstar} analogously yields
\begin{equation}
\mathcal{D}\Sigma_r = -V_\perp^\top\mathcal{G}^\top U + \Sigma_\perp\mathcal{C}.
\label{eq:D}
\end{equation}
For each pair $(i,j)$ with $1\le i \le r$, $1\le j \le k$, define the shorthand
\begin{equation}
a_{ji} \triangleq u_{r+j}^\top \mathcal{G}v_i,\qquad b_{ji} \triangleq v_{r+j}^\top \mathcal{G}^\top u_i = u_i^\top \mathcal{G} v_{r+j}.
\end{equation}
Equations~\eqref{eq:C} and~\eqref{eq:D} read component-wise:
\begin{equation}
\sigma_i\mathcal{C}_{ji} = -a_{ji} + \sigma_{r+j}\mathcal{D}_{ji}, \qquad
\sigma_i\mathcal{D}_{ji} = -b_{ji} + \sigma_{r+j}\mathcal{C}_{ji}.
\end{equation}
Eliminating $\mathcal{D}_{ji}$ and using the distinct-singular-value assumption $\sigma_i^2 \ne \sigma_{r+j}^2$:
\begin{equation}
\begin{cases}
\mathcal{C}_{ji} = -\dfrac{\sigma_i a_{ji} + \sigma_{r+j} b_{ji}}{\sigma_i^2 - \sigma_{r+j}^2},\\[6pt]
\mathcal{D}_{ji} = -\dfrac{\sigma_{r+j}a_{ji} + \sigma_i b_{ji}}{\sigma_i^2 - \sigma_{r+j}^2}.
\end{cases}
\label{eq:Dji}
\end{equation}

\textbf{Reassembling the evolution equations.} Substituting~\eqref{eq:Dji} back into $\dot U = U_\perp \mathcal{C}$ and expressing the result as a rank-one sum:
\begin{equation}
\dot U = -P_U^\perp\mathcal{G}VS^{-1} - \left[\sum_{\substack{1\le i \le r\\ 1\le j \le k}}\frac{\sigma_{r+j}}{\sigma_i^2 - \sigma_{r+j}^2}(\sigma_{r+j}u_{r+j}^\top \mathcal{G}v_i + \sigma_i u_i^\top \mathcal{G}v_{r+j})u_{r+j}v_i^\top\right]VS^{-1},
\end{equation}
where the first term arises from extracting the $\sigma_i^{-1}a_{ji}$ part of $\mathcal{C}_{ji}$, and the bracketed sum collects the remaining $\sigma_{r+j}$-dependent corrections. To verify the splitting, observe
\begin{equation}
\mathcal{C}_{ji} = \underbrace{-\frac{a_{ji}}{\sigma_i}}_{\text{DLRA part}} - \underbrace{\frac{\sigma_{r+j}(\sigma_{r+j}a_{ji} + \sigma_i b_{ji})}{\sigma_i(\sigma_i^2 - \sigma_{r+j}^2)}}_{\text{curvature correction}},
\end{equation}
which follows from the identity $\frac{\sigma_i}{\sigma_i^2 - \sigma_{r+j}^2} = \frac{1}{\sigma_i} + \frac{\sigma_{r+j}^2}{\sigma_i(\sigma_i^2 - \sigma_{r+j}^2)}$. The DLRA part equals $-U_\perp U_\perp^\top \mathcal{G}\,V \Sigma_r^{-1} = -P_U^\perp \mathcal{G}\,V S^{-1}$ since $P_U^\perp = U_\perp U_\perp^\top$ on $\mathrm{col}(U)^\perp$.

The equation for $\dot V$ follows by applying the same argument to~\eqref{eq:Dji} with the symmetrical exchange of the roles of $(U,V)$ and $(\sigma_i,\sigma_{r+j})$.\\
% \begin{equation}
% \dot V = -P_V^\perp\mathcal{G}^{\top}US^{-T} - \left[\sum_{\substack{1\le i \le r\\ 1\le j \le k}}\frac{\sigma_{r+j}}{\sigma_i^2 - \sigma_{r+j}^2}(\sigma_{r+j}v_{r+j}^\top \mathcal{G}^{\top}u_{r+j} + \sigma_i v_i^\top \mathcal{G}^{\top}u_i)v_{r+j}u_i^\top\right]US^{-T}.
% \end{equation}
Finally, the evolution of $S$ is straightforward:
\begin{equation}
\dot S = \frac{d}{dt}(U^\top W_f V) = U^\top \dot W_f V = -U^\top \mathcal{G}V.
\end{equation}
Collecting these three results yields exactly the coupled system stated in Theorem~\ref{thm:best-lowrank} below.

\begin{theorem}
\label{thm:best-lowrank}
Let $W_f(t)$ denote the original network weights obtained through full-parameter gradient training, as defined in Equation~\eqref{equ:full_net}. Assume that there is no repeated singular value crossing the truncation, and let $USV^\top$ represent the best rank-$r$ approximation of $W_f(t)$. Then its derivative is given by
\begin{equation}
\begin{cases}
\dot U = -P_U^\perp\mathcal{G}VS^{-1}
- \left[\displaystyle\sum_{\substack{1\le i \le r\\ 1\le j \le k}}(\mathcal{P}_{ij}u_{r+j}^\top \mathcal{G}v_i + \mathcal{Q}_{ij} u_i^\top\mathcal{G}v_{r+j})u_{r+j}v_i^\top\right]VS^{-1},\\[14pt]
\dot V = -P_V^\perp\mathcal{G}^{\top}US^{-T}
- \left[\displaystyle\sum_{\substack{1\le i \le r\\ 1\le j \le k}}(\mathcal{P}_{ij}v_{r+j}^\top \mathcal{G}^{\top}u_{i} + \mathcal{Q}_{ij} v_i^\top\mathcal{G}^{\top}u_{r+j})v_{r+j}u_i^\top\right]US^{-T},\\[14pt]
\dot S = -U^\top \mathcal{G}V,
\end{cases}
\end{equation}
where $\mathcal{G} = \nabla_W \mathcal{L}(W_f(t))$, $\mathcal{P}_{ij}=\sigma_{r+j}^2/(\sigma_i^2-\sigma_{r+j}^2)$, and $\mathcal{Q}_{ij}=\sigma_i\sigma_{r+j}/(\sigma_i^2-\sigma_{r+j}^2)$.
\end{theorem}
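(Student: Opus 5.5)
The plan is to make the derivation preceding the statement rigorous, organized in three steps.

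\textbf{Step 1: smooth dependence of the factors.} Because no singular value crosses the truncation, $\sigma_r(t)>\sigma_{r+1}(t)$ along the trajectory. Standard analytic perturbation theory for the symmetric matrices $W_fW_f^\top$ and $W_f^\top W_f$ then makes the dominant rank-$r$ left and right singular subspaces differentiable in $t$. Inside these subspaces I choose orthonormal bases $U(t),V(t)$ satisfying the gauge $U^\top\dot U=0$, $V^\top\dot V=0$; this amounts to solving a linear ODE for an orthogonal rotation. With this gauge, $S=U^\top W_fV$ need not stay diagonal. I will therefore read $\sigma_i,u_i,v_i$ in the statement as the instantaneous singular triples, so at a fixed time $t$ we may rotate to make $S=\Sigma_r$. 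The right-hand sides are basis-covariant, so this rotation changes nothing. I also assume the distinct-singular-value condition $\sigma_i^2\neq\sigma_{r+j}^2$, which the strict gap guarantees.

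\textbf{Step 2: differentiate the optimality constraints.} I differentiate $P_U^\perp W_fV=0$ and $P_V^\perp W_f^\top U=0$ exactly as in \eqref{eq:star}--\eqref{eq:starstar}. This uses the relations \eqref{eq:ortho} and $\dot W_f=-\mathcal{G}$ from \eqref{equ:full_net}. I then expand $\dot U=U_\perp\mathcal{C}$ and $\dot V=V_\perp\mathcal{D}$.

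One point needs care here. If $W_f$ is $m\times n$ with $r+k<\min(m,n)$, the orthogonal complement also contains directions belonging to zero singular values. I therefore let $U_\perp,V_\perp$ be full orthonormal completions, with $\sigma_{r+j}=0$ allowed. For those indices, $\mathcal{P}_{ij}$ and $\mathcal{Q}_{ij}$ vanish, and the component is simply the DLRA term. The "DLRA part" identity $U_\perp U_\perp^\top=P_U^\perp$ then holds exactly, rather than only on a subspace.

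Projecting onto each pair $u_{r+j}$, $v_i$ gives the $2\times2$ linear system in $(\mathcal{C}_{ji},\mathcal{D}_{ji})$. Its determinant is $\sigma_i^2-\sigma_{r+j}^2\neq0$, so solving it yields \eqref{eq:Dji}.

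\textbf{Step 3: reassemble.} I write
\[
\mathcal{C}_{ji}=-\frac{a_{ji}}{\sigma_i}-\frac{\mathcal{P}_{ij}a_{ji}+\mathcal{Q}_{ij}b_{ji}}{\sigma_i}.
\]
Summing $u_{r+j}\mathcal{C}_{ji}e_i^\top$ over $i,j$ gives the formula for $\dot U$, using $e_i^\top=v_i^\top V$ and $e_i^\top\Sigma_r^{-1}=\sigma_i^{-1}e_i^\top$. The $\dot V$ formula follows from the symmetric expression for $\mathcal{D}_{ji}$ under the swap $U\leftrightarrow V$, $\mathcal{G}\leftrightarrow\mathcal{G}^\top$.

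I expect the main obstacle to be bookkeeping the indices in the $\dot V$ correction so that it matches the stated form. The $\mathcal{D}$ coefficient pairs $\mathcal{Q}_{ij}$ with $a_{ji}=u_{r+j}^\top\mathcal{G}v_i$ and $\mathcal{P}_{ij}$ with $b_{ji}=v_{r+j}^\top\mathcal{G}^\top u_i$. I would check this carefully and verify that the stated expression is consistent. If it is not, I would state the correct pairing.

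Finally, I differentiate $S=U^\top W_fV$. The terms $\dot U^\top W_fV=\dot U^\top U\Sigma_r$ and $U^\top W_f\dot V=\Sigma_rV^\top\dot V$ both vanish by the gauge, which leaves $\dot S=-U^\top\mathcal{G}V$.
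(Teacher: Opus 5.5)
Your proposal is correct and follows essentially the same route as the paper: differentiate $P_U^\perp W_fV=0$ and $P_V^\perp W_f^\top U=0$, expand $\dot U=U_\perp\mathcal{C}$ and $\dot V=V_\perp\mathcal{D}$, solve the $2\times 2$ systems, and split off the DLRA part. Your additions fill in points the paper glosses over. Using full orthonormal completions makes $U_\perp U_\perp^\top=P_U^\perp$ exact, and reconciling the gauge with a diagonal $S$ uses a constant rotation plus covariance. The pairing you derive for $\mathcal{D}_{ji}$ also agrees with the stated $\dot V$ formula: $\mathcal{Q}_{ij}$ multiplies $a_{ji}=v_i^\top\mathcal{G}^\top u_{r+j}$ and $\mathcal{P}_{ij}$ multiplies $b_{ji}=v_{r+j}^\top\mathcal{G}^\top u_i$, so no correction is needed.
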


\textbf{Interpretation.} Comparing Theorem~\ref{thm:best-lowrank} with the DLRA update in Equation~\eqref{equ:dlra}, two discrepancies stand out:

\textit{(1) Gradient locality.} The best rank-$r$ flow evaluates the gradient $\nabla_W \mathcal{L}(W_f(t))$ at the full-parameter point $W_f(t)$, whereas DLRA evaluates the gradient $\nabla_W \mathcal{L}(W(t))$ at the neighboring low-rank point $W(t)$. This mismatch is a necessary price of working on the low-rank manifold and cannot be eliminated without sacrificing the memory savings.

\textit{(2) Curvature coupling.} The bracketed correction terms, governed by the factor $\sigma_{r+j}/(\sigma_i^2 - \sigma_{r+j}^2)$, explicitly couple the retained singular triplets $(\sigma_i, u_i, v_i)$ with the truncated ones $(\sigma_{r+j}, u_{r+j}, v_{r+j})$. When the gap $\sigma_i^2 - \sigma_{r+j}^2$ is small---precisely the regime encountered under aggressive compression---these corrections become non-negligible. DLRA drops them entirely, which we identify as the root cause of the rank-collapse and accuracy-degradation phenomena documented in Figure~\ref{fig_motivation}.

This analysis directly motivates our design: rather than discarding the truncated basis $(U_\perp, V_\perp)$ at each step, SDLRT \emph{preserves it as a compensation buffer $(U_{\mathrm{neg}}, V_{\mathrm{neg}})$ and reinjects it into the subsequent update of $S$}, thereby partially recovering the curvature coupling missing from Equation~\eqref{equ:dlra}.

\begin{wrapfigure}{r}{0.5\textwidth}
  \centering
  \includegraphics[width=0.98\linewidth]{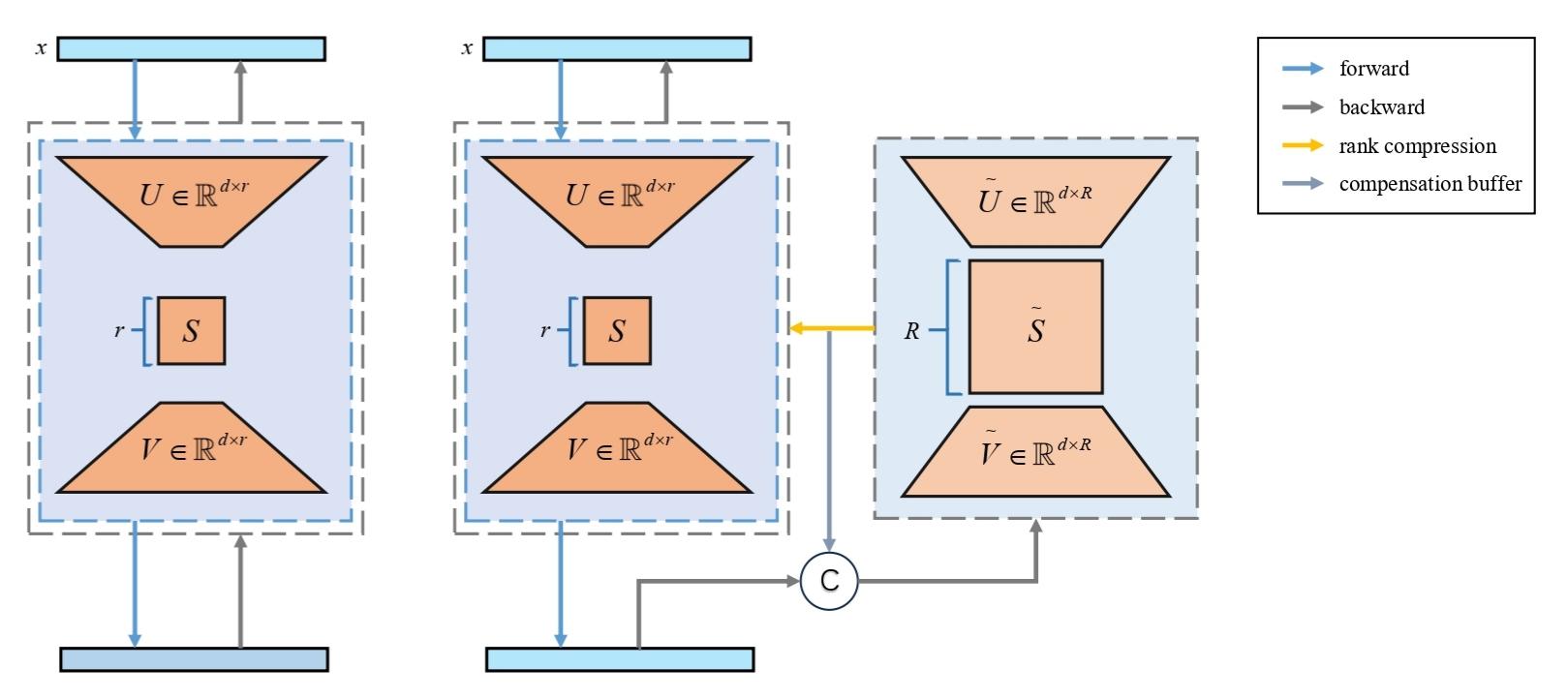}
  \caption{The procedures of SVD-based low-rank training (left) and the stable SVD-based low-rank training with compensation (right).}
  \label{fig_procedure}
\end{wrapfigure}

\subsection{Stabilizing the Dynamic Low-Rank Training}
\label{sec:method}
To better approximate the best rank-$r$ approximation, we retain the discarded singular value components at each update to assist the next one, as shown in the right panel of Figure~\ref{fig_procedure}. Instead of increasing the rank $r$, which is computationally expensive, we only incorporate small singular value information when updating the small matrix $S$ with the gradient flow, leaving the dimensions of the basis matrices unchanged. In this way, the additional computational and storage costs are basically only related to the size of the small matrix $S$. Since $r \ll \min(m, n)$, the additional costs introduced by this framework are essentially negligible. Based on the framework illustrated in the right panel of Figure~\ref{fig_procedure}, we propose a more stable variant of DLRT with the following update strategy for each layer's low-rank factors:

\begin{enumerate}[1.]
\item Initialize $K_0=US$, $L_0=VS^{\top}$ and update $K$ and $L$ separately via gradient flow to obtain $K_1$ and $L_1$.
\item Extend $(K_1, U)$ and $(L_1, V)$ with the truncated basis matrices $U_{\mathrm{neg}}$ and $V_{\mathrm{neg}}$ from the previous iteration to form the compensated matrices $(K_1, U, U_{\mathrm{neg}})$ and $(L_1, V, V_{\mathrm{neg}})$.
\item Project the compensated matrices onto the Stiefel manifold (e.g., via QR decomposition) to obtain the orthogonal bases $U_1$ and $V_1$.
\item Initialize $S_0=U_1^{\top}W_{\text{prev}}V_1$, where $W_{\text{prev}}=USV^{\top}$. And in the newly updated space of matrices $U_1 S V_1^{\top}$, perform a Galerkin-based update to obtain a new small matrix $S_1$.
\item Compute the SVD of $S_1$, i.e., $S_1=\hat U\hat S\hat V^{\top}$, and mark the rank of $S_1$ as $r_{\max}$. Determine the new rank $r_1$ based on the tolerance $\vartheta=\tau\|\hat S\|$ and update the matrices:
\begin{equation}
\begin{cases}
U_1=U_1\hat U[:, :r_1],\ S_1=\hat S[:r_1,:r_1],\ V_1=V_1\hat V[:, :r_1],\\
U_{\mathrm{neg}}=U_1\hat U[:, r_1:\min(2r_1,r_{\max})],\ V_{\mathrm{neg}}=V_1\hat V[:, r_1:\min(2r_1,r_{\max})].
\end{cases}
\nonumber
\end{equation}
\item Negative feedback mechanism: If the new rank $r_1$ is less than the initial rank $r_0$, multiply the tolerance factor $\tau$ by $\omega$, where $\omega < 1$ is a feedback factor.
\end{enumerate}

\textbf{Computational complexity. } For the training of each layer, the first step---initializing $K_0$ and $L_0$---requires $O(nr^2)$ and $O(mr^2)$ operations, respectively, while the minimal computational cost of calculating the gradient flow with respect to $K$ and $L$ is $O(r(n + m))$. The second step incurs no computational cost. The third step (QR decomposition on the compensation matrix) requires $O(9nr^2)$ and $O(9mr^2)$ operations. The fourth step (initializing $S_0$) requires $O(3nr^2 + 3mr^2 + 6r^3)$ operations, with the minimal gradient update costing $O(3r(n + m) + 9r^2)$. The SVD computation in the worst case incurs a cost of $O(r^3)$, and computing the updated basis matrices for the new rank requires $O(4r^2(n + m))$. The tolerance adjustment in the sixth step incurs no computational cost. In summary, given that $r \ll \min(n, m)$, the computational cost for each layer can be controlled within $O(r^2(n + m))$, comparable to naive DLRT and significantly lower than the full-rank baseline's $O(nm)$.

\subsection{Exactness Property and Error Analysis}
In this section, we conduct a theoretical analysis of the proposed method. First, to demonstrate the exactness of our low-rank training, we assume the existence of a well-performing network with low-rank parameters under a certain initialization, and prove that our method can accurately learn these low-rank parameters.

\begin{theorem}[Exactness]
\label{thm:exactness}
Assume that $W(t)$ represents the low-rank network weights obtained at time $t$ through gradient-based training, starting from initialized weights $W(0)$. Under SDLRT, if the network is initialized as $W(0)$ and its parameters are factorized as $U(0), S(0), V(0)$, then after a certain training period of $T$, the following relationship holds:
$U(t)S(t)V(t)^{\top}=W(t)$.
\end{theorem}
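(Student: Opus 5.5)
This is the standard exactness property of the unconventional (BUG) integrator of Ceruti--Lubich, as used in DLRT. I read the hypothesis as follows: the trajectory $W(t)$ generated by gradient training from $W(0)$ stays of rank $r$ on $[0,T]$. More precisely, it admits a factorization $W(t)=\tilde U(t)\tilde S(t)\tilde V(t)^\top$ with $\mathrm{col}(W(t))\subseteq\mathrm{col}(U(0))$ and $\mathrm{row}(W(t))\subseteq\mathrm{col}(V(0))$, so the column and row spaces do not rotate out of the initial bases. Equivalently, $W(t)$ keeps fixed column and row spaces, which is the situation of a ``well-performing network with low-rank parameters'' for which exactness is classically proved. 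The goal is to show that each step of SDLRT reproduces $W(t)$ exactly. I would argue step by step along the six stages of the update in Section~\ref{sec:method}, and then induct over the time steps.

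\textbf{$K$- and $L$-steps.} With $K_0=U(0)S(0)$, the $K$-flow is $\dot K=-\nabla_W\mathcal L(KV(0)^\top)V(0)$. If $W(t)=U(0)\tilde S(t)V(0)^\top$ with $\tilde S(t)=U(0)^\top W(t)V(0)$, then $K(t):=W(t)V(0)=U(0)\tilde S(t)$ satisfies this equation exactly. This uses $\dot W=-\nabla\mathcal L(W)$ and $W=WV(0)V(0)^\top$. By uniqueness of ODE solutions, $K_1=W(T)V(0)$, and symmetrically $L_1=W(T)^\top U(0)$. Hence $\mathrm{col}(K_1)\subseteq\mathrm{col}(U(0))$.

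\textbf{Augmented bases.} The compensated matrix $[K_1,U,U_{\mathrm{neg}}]$ has a column span containing $\mathrm{col}(W(T))$. Since QR preserves spans, $U_1U_1^\top W(T)=W(T)$, and likewise $W(T)V_1V_1^\top=W(T)$. The buffer $U_{\mathrm{neg}},V_{\mathrm{neg}}$ only enlarges the span, so it cannot destroy this inclusion. This is exactly the sense in which the buffer ``does not alter the starting point'': since $W_{\mathrm{prev}}=U(0)S(0)V(0)^\top$ lies in the span, $U_1S_0V_1^\top=U_1U_1^\top W_{\mathrm{prev}}V_1V_1^\top=W_{\mathrm{prev}}$.

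\textbf{$S$-step and truncation.} The Galerkin $S$-flow $\dot S=-U_1^\top\nabla\mathcal L(U_1SV_1^\top)V_1$ with $S(0)=S_0$ is solved exactly by $S(t)=U_1^\top W(t)V_1$. Indeed $U_1S(t)V_1^\top=W(t)$ by the inclusions, so the right-hand side equals $U_1^\top\dot W(t)V_1$; uniqueness then gives $U_1S_1V_1^\top=W(T)$.

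The truncation step needs more care. I would argue that $S_1$ has rank exactly $r$, and that the tolerance, after the negative feedback in step 6, keeps all $r$ nonzero singular values. This is the main obstacle, because with a fixed positive $\vartheta$ a small singular value could be cut. I would handle it by assuming $\tau$ is below the ratio $\sigma_r(W)/\|W\|$, which the feedback enforces whenever $r_1<r_0$. Only zero singular values are then discarded, and those go into $U_{\mathrm{neg}},V_{\mathrm{neg}}$ but carry no mass, so the truncated product still equals $W(T)$.

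Finally I would induct over successive steps, using the new factors $U(T),S(T),V(T)$ as initial data. The span inclusions propagate, so $U(t)S(t)V(t)^\top=W(t)$ holds at every step time up to the final time $T$.
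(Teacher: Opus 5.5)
\textbf{Gap: the frozen-subspace hypothesis.} Your steps are internally correct under your reading, but that reading does not follow from ``$W(t)$ has rank $r$'': you assume $\mathrm{col}(W(t))\subseteq\mathrm{col}(U(0))$ and $\mathrm{row}(W(t))\subseteq\mathrm{col}(V(0))$ for all $t$. This assumption trivializes the theorem. You find yourself that $\mathrm{col}(K_1)\subseteq\mathrm{col}(U(0))$, so the $K$- and $L$-steps contribute nothing. The claim then collapses to ``a Galerkin flow on a fixed subspace that already contains the solution is exact.'' The real content of the exactness property is that the integrator recovers a column and row space that have \emph{rotated} during the step. The paper's proof adapts Ceruti--Kusch--Lubich to establish exactly that.

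You were pushed into frozen subspaces because you kept the state-dependent right-hand side $-\nabla\mathcal{L}(KV_0^\top)V_0$. To identify $K(t)$ with $W(t)V_0$ you need $W(t)=W(t)V_0V_0^\top$ along the whole interval. The $S$-step likewise needs $U_1U_1^\top W(t)V_1V_1^\top=W(t)$ for every $t$.

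The paper instead takes $F(t,Y)=\dot W(t)$, independent of $Y$, for an arbitrary rank-$r$ path $W(t)=U(t)S(t)V(t)^\top$. This is the setting in which exactness of such integrators is stated. Both substeps then integrate directly:
\[
K_1=U_0S_0+(W(t_1)-W(t_0))V_0=W(t_1)V_0=U(t_1)\,S(t_1)V(t_1)^\top V_0 .
\]
Since $S(t_1)$ and $V(t_1)^\top V_0$ are invertible, $\mathrm{range}(K_1)=\mathrm{range}(W(t_1))$. This is the missing idea; the $L$-step works symmetrically with $U(t_1)^\top U_0$. Similarly $\widehat S(t_1)=\widehat U^\top W(t_1)\widehat V$. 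So the range inclusion is needed only at $t_1$, plus $\mathrm{range}(U_0)\subseteq\mathrm{range}(\widehat U)$ for the initial value. Then $\widehat U\widehat S(t_1)\widehat V^\top=\widehat U\widehat U^\top W(t_1)\widehat V\widehat V^\top=W(t_1)$ follows.

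\textbf{Truncation step.} Your claim that the negative feedback ``enforces'' $\tau<\sigma_r(W)/\|W\|$ does not hold. The feedback fires only after $r_1<r_0$ has occurred. At that point a nonzero singular value of $S_1$ has already been discarded, and these are exactly the nonzero singular values of $W(T)$. Exactness is therefore already lost at that step, and every later step starts from the wrong point. Shrinking $\tau$ afterwards cannot repair this, and one multiplication by $\omega$ need not push $\tau$ below the threshold anyway. You have to assume outright that the tolerance stays below $\sigma_r(W(t))$ at every step. The paper does this when it says $\vartheta$ is chosen so that all $r$ components are retained.
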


However, in the absence of additional constraints, networks trained via gradient-based methods may not exhibit significant low-rank properties. In practice, if the loss function is $L$-Lipschitz continuous and $B$-bounded, we can derive the following:

\begin{theorem}[Error Bound]\label{thm:error}
Assume that $W_f(t)$ represents the full-rank network weights obtained at time~$t$ through
training based on Equation~\ref{equ:full_net}, starting from the initialized weights
$W_f(0)$. Moreover, if $W_f(0)$ is projected onto a low-rank manifold as initialization,
represented by $U(0),S(0),V(0)$, then after a training period of $T=n\eta$ where $\eta$ denotes the time of each step, the SDLRT
iterates $\hat{U}_t,\hat{S}_t,\hat{V}_t$ satisfy
\begin{equation}\label{eq:bound_sdlrt}
  % \lVert \hat{U}_{t}\hat{S}_{t}\hat{V}_{t}^{\top}-W(t)\rVert_{F}
  % \le c_{0}\delta + c_{1}\varepsilon + c_{2}h
  %   + \gamma c_{3}n\theta,
  \|\hat U_t\hat S_t\hat V_t^{\top}-W_f(n\eta)\|_F\le
c_0\delta+c_1\gamma\varepsilon+c_2\eta+c_3\vartheta/\eta,
\nonumber
\end{equation}
where $c_0,c_1,c_2,c_3$ depend only on $L,B,T$, $\delta=\lVert U_0 S_0 V_0^{\top}-W_f(0)\rVert_F$ is the initial projection error, and $\gamma\in(0,1)$ is the spectral contraction factor induced by the compensation buffer. In particular, the improvement of upper bound over DLRT satisfies
\begin{equation}\label{eq:gap}
  E_{\mathrm{DLRT}}-E_{\mathrm{SDLRT}}=(1-\gamma)c_1\varepsilon>0.
  \nonumber
\end{equation}
\end{theorem}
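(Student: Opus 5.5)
The bound will follow the standard error analysis of rank-adaptive BUG/DLRT integrators (Ceruti--Kusch--Lubich; Schotthöfer et al.). I will adapt it step by step, with the compensation buffer entering only through the term that controls the normal component of the gradient.

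\emph{Setup.} Write $W_f(t)$ for the full flow of Equation~\eqref{equ:full_net}, and $W_\ell=\hat U_\ell\hat S_\ell\hat V_\ell^\top$ for the SDLRT iterate after $\ell$ steps. Fix $\varepsilon$ as the modelling error, i.e.\ a bound on the normal component $\|(I-\Pi_{\mathcal T(W)})\nabla_W\mathcal L(W)\|_F\le\varepsilon$ along the trajectory. Use $L$ for the Lipschitz constant of $\nabla\mathcal L$ and $B$ for the bound on $\|\nabla\mathcal L\|_F$.

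\emph{Local error decomposition.} For the local error over one step $[t_\ell,t_{\ell+1}]$, I compare $W_{\ell+1}$ with the exact flow $\widetilde W(t_{\ell+1})$ started from $W_\ell$. I split the error into three parts:
\begin{enumerate}[(i)]
\item the discretisation error of the explicit $K$, $L$, $S$ gradient steps, which is $O(B L\eta^2)$ by Taylor expansion;
\item the projection error from Galerkin-updating $S$ in the augmented space $\mathrm{span}(U_1)\otimes\mathrm{span}(V_1)$ instead of the full space;
\item the truncation error of step 5, which is at most $\vartheta$ by the choice of $r_1$.
\end{enumerate}
For (ii) I follow the BUG robustness argument. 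Because the augmented basis contains $U$, $K_1$ and $U_{\mathrm{neg}}$, the exactness property (Theorem~\ref{thm:exactness} / Ceruti--Lubich exactness) guarantees that the tangential part of the flow is reproduced up to $O(\eta^2)$. The remaining part is $\eta\|P^\perp_{U_1}\mathcal G P^\perp_{V_1}\|_F$. In DLRT this quantity is bounded by $\eta\varepsilon$.

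\emph{The compensation step (main obstacle).} The key new step is to show that adding $U_{\mathrm{neg}},V_{\mathrm{neg}}$ contracts this residual to at most $\gamma\eta\varepsilon$ with $\gamma<1$. I would first define $\gamma$ as the worst-case ratio $\|P^\perp_{[U,K_1,U_{\mathrm{neg}}]}\mathcal G P^\perp_{[V,L_1,V_{\mathrm{neg}}]}\|_F/\|P^\perp_{[U,K_1]}\mathcal G P^\perp_{[V,L_1]}\|_F$. I would then argue that $\gamma<1$ whenever the buffer directions carry nonzero gradient mass. By Theorem~\ref{thm:best-lowrank}, the neglected coupling terms live precisely on $u_{r+j}v_i^\top$ and $u_iv_{r+j}^\top$, so they are captured by the span of the buffer. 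Making $\gamma$ a uniform constant strictly below one is where the argument is weakest. I expect to have to assume it, for example via a gap/alignment hypothesis between the buffer and the top normal gradient directions, rather than derive it.

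\emph{Global error.} I would use Lady Windermere's fan: the gradient flow is $L$-Lipschitz, so errors propagate with factor $e^{LT}$. Summing the $n=T/\eta$ local errors gives
\[
e^{LT}\delta+n\bigl(c\eta^2+\gamma\eta\varepsilon+\vartheta\bigr)
= c_0\delta+c_2\eta+c_1\gamma\varepsilon+c_3\vartheta/\eta .
\]
This uses $n\eta=T$, and absorbs $T$ and $e^{LT}$ into the constants. The DLRT bound is the same expression with $\gamma=1$ and identical $c_0,c_1,c_2,c_3$. Subtracting gives $(1-\gamma)c_1\varepsilon>0$, which is the claimed gap.
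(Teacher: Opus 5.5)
\textbf{A step that fails: the residual you contract is not the Galerkin residual.} Write $N=P_U^\perp\mathcal G P_V^\perp$ for the normal component, $P_{U_1}=U_1U_1^\top$, and $\widetilde U,\widetilde V$ for orthonormal bases of the DLRT spaces $[U,K_1]$, $[V,L_1]$. Because $K_1$ and $L_1$ capture $\mathcal G V$ and $\mathcal G^\top U$ up to $O(\eta)$, the two-sided projection error of the local exact solution $A_1$ is $\eta\|N-P_{U_1}NP_{V_1}\|_F+O(\eta^2)$. Moreover,
\[
\|N-P_{U_1}NP_{V_1}\|_F^2=\|P^\perp_{U_1}NP_{V_1}\|_F^2+\|P^\perp_{U_1}NP^\perp_{V_1}\|_F^2+\|P_{U_1}NP^\perp_{V_1}\|_F^2 ,
\]
so your $\|P^\perp_{U_1}\mathcal G P^\perp_{V_1}\|_F$ is only the middle block.

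Your ratio does not control the two mixed blocks. Take $N=xy^\top$ with $x\perp\mathrm{range}(U_1)$ and $y\in\mathrm{range}(V_1)$ a buffer direction orthogonal to $\mathrm{range}(\widetilde V)$. Then your ratio is $0$. Yet the SDLRT and DLRT residuals coincide, both equal to $\eta\|x\|\|y\|$, so the per-step bound $\gamma\eta\varepsilon$ is false with your $\gamma$. The same example shows that ``the buffer carries gradient mass'' does not by itself give any gain. The appeal to Theorem~\ref{thm:best-lowrank} does not help either. The matrices $u_{r+j}v_i^\top$ and $u_iv_{r+j}^\top$ each have one factor in $\mathrm{range}(U)$ or $\mathrm{range}(V)$, so $P^\perp_{U_1}(\cdot)P^\perp_{V_1}$ annihilates them and they contribute nothing to your ratio.

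\textbf{How the paper handles it, and the assumption you share.} The repair is to attach $\gamma$ to the quantity that actually enters the error, which is what the paper does. It defines $\gamma$ through the one-sided projection errors of $A_1$:
\[
\|(I-\widehat U\widehat U^\top)A_1\|_F\le\gamma\|(I-\widetilde U\widetilde U^\top)A_1\|_F\le\gamma\theta ,
\]
and likewise for $V$. Here $\widehat U,\widehat V$ are your $U_1,V_1$ and $\theta$ is the DLRT projection bound. From there it follows your skeleton: $2\gamma\theta$ for the two-sided error, Gronwall for the $S$-step defect, $+\vartheta$ for truncation, and Lady Windermere's fan. The ratio $\|N-P_{U_1}NP_{V_1}\|_F/\|N-P_{\widetilde U}NP_{\widetilde V}\|_F$ would serve equally well.

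In either version, nesting of the subspaces gives only $\gamma\le1$. That is all the paper's appeal to ``monotonicity of orthogonal projection'' actually delivers. Indeed $\gamma=1$ whenever the buffer is empty ($r_1=r_{\max}$ at the previous step), and also in the example above. Your plan to impose a uniform $\gamma<1$ as an explicit hypothesis is therefore right, and the paper's proof relies on the same assumption implicitly.

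A minor point: the $O(\eta^2)$ reproduction of the tangential part comes from $K_1\approx US-\eta\mathcal G V$ and $L_1\approx VS^\top-\eta\mathcal G^\top U$ (the BUG robustness lemma), not from Theorem~\ref{thm:exactness}.
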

% \begin{theorem}
% \label{thm:error}
% Assume that $W(t)$ represents the full-rank network weights obtained at time $t$ through training based on Equation~\eqref{equ:full_net}, starting from the initialized weights $W(0)$. Moreover, if $W(0)$ is projected onto a low-rank manifold as initialization, represented by $U(0), S(0), V(0)$, then after a training period of $t = nh$, the following relationship holds:
% \begin{equation}
% \| U_tS_tV_t^{\top}-W(t)\|_F \le c_0\delta + c_1\varepsilon + c_2 h + c_3 n\theta,
% \end{equation}
% where $\delta = \|U_0S_0V_0^{\top}-W(0)\|_F$ is the initial projection error. %$\varepsilon$ is the truncation error per step, $h$ is the step size, $\theta$ is the accumulated coupling correction bound, and $c_0, c_1, c_2, c_3$ are constants depending only on $L$, $B$ and the spectral gap.
% \end{theorem}

% \begin{figure}[t]
%   \centering
%   \includegraphics[width=\linewidth]{fig/fig_alexnet_vgg.jpg}
%   \caption{Comparison of accuracy under different training and testing compression rates on CIFAR-10. Error bars represent the minimum and maximum across 5 samples. }
%   \label{fig_cifar10}
% \end{figure}

% 换为1*4排列
\begin{figure}[t]
  \centering
  \begin{subfigure}[b]{0.49\linewidth}
    \centering
    \includegraphics[width=\linewidth]{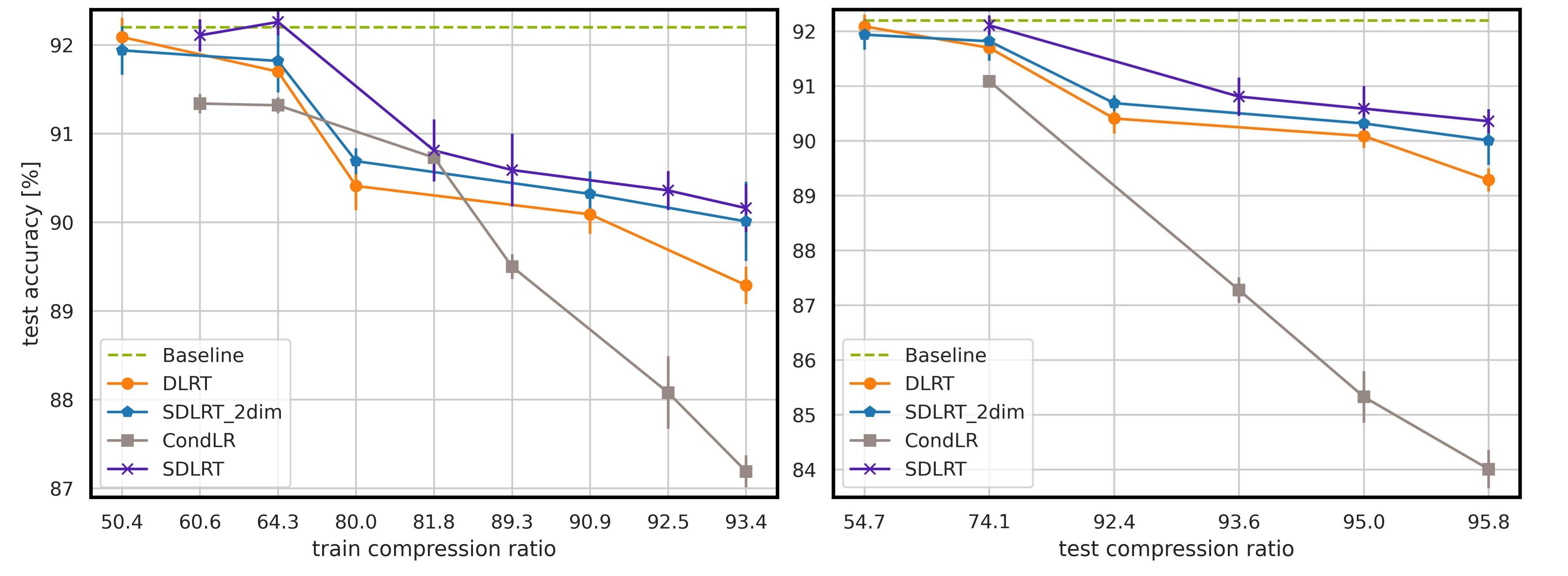}
    \caption{VGG-16 on CIFAR-10}
    \label{fig_cifar10_vgg}
  \end{subfigure}
  \hfill
  \begin{subfigure}[b]{0.49\linewidth}
    \centering
    \includegraphics[width=\linewidth]{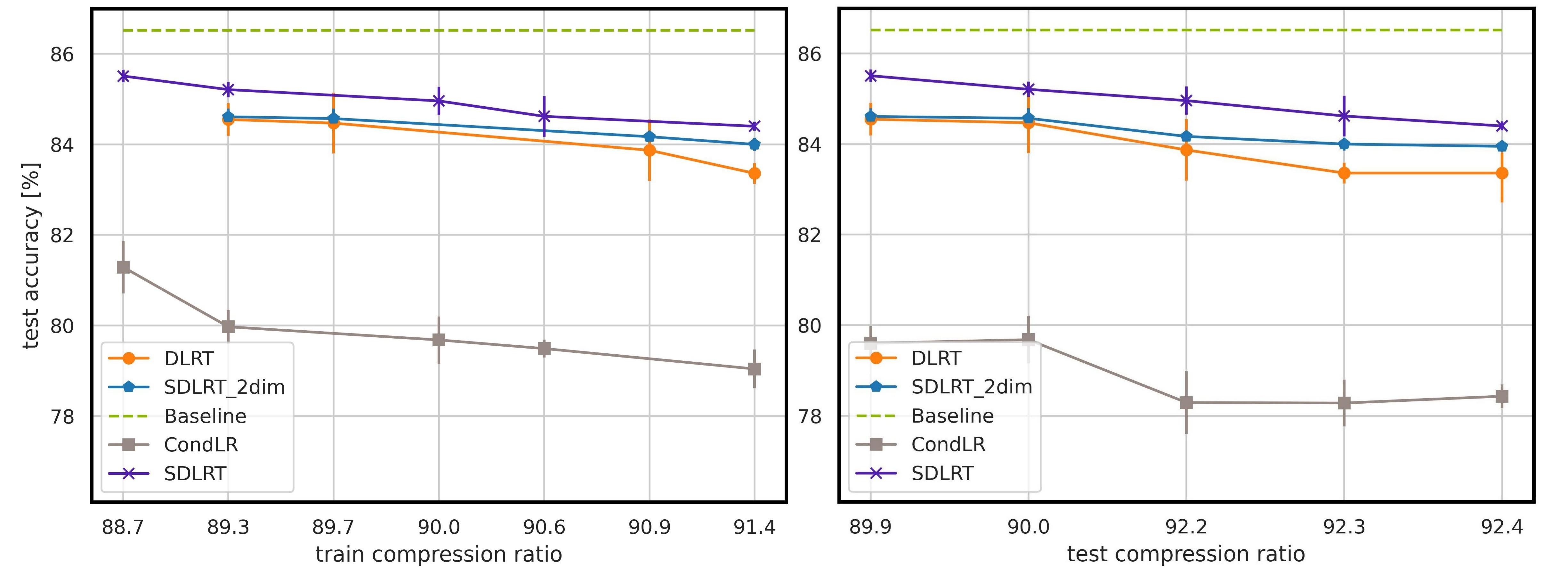}
    \caption{AlexNet on CIFAR-10}
    \label{fig_cifar10_alex}
  \end{subfigure}
  \caption{Comparison of accuracy under different training and testing compression on CIFAR-10. SDLRT and $\text{SDLRT}_{\text{2dim}}$ achieve the top two highest scores. Error bars represent the minimum and maximum across 5 samples.}
  \label{fig_cifar10}
\end{figure}

\begin{figure}[t]
    \centering
    \begin{subfigure}[b]{0.49\linewidth}
        \centering
        \includegraphics[width=\linewidth]{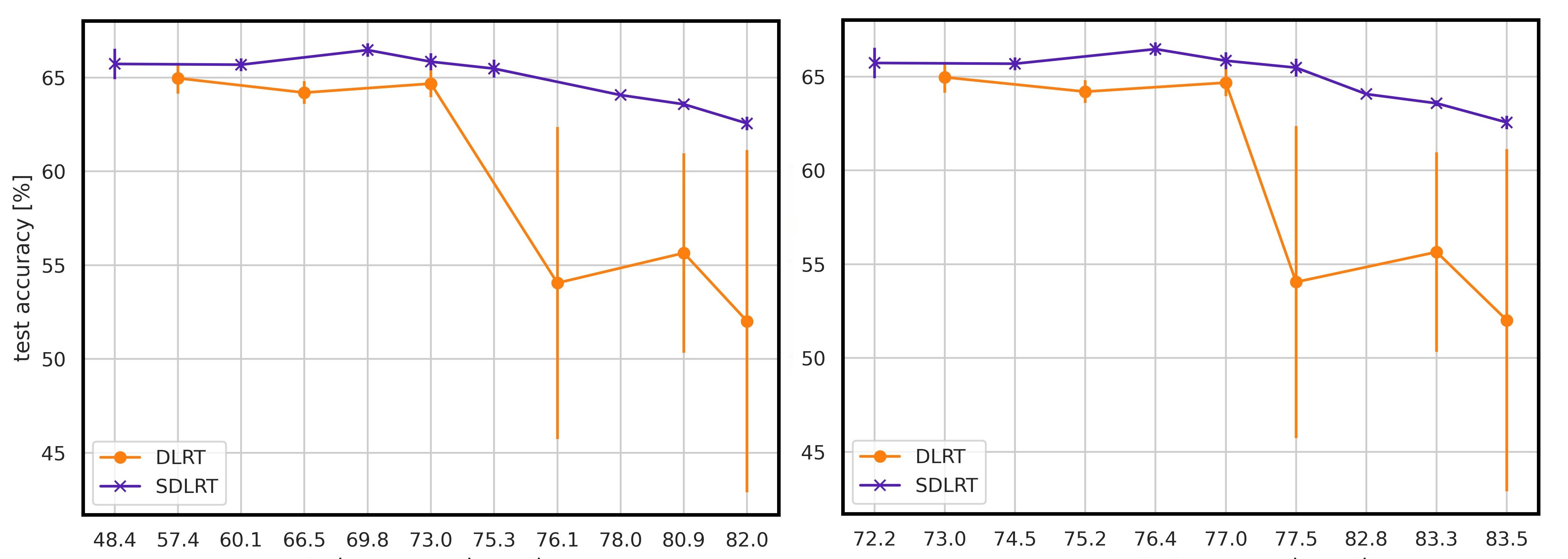}
        \caption{VGG-19 on CIFAR-100}
        \label{fig_cifar100}
    \end{subfigure}
    \hfill
    \begin{subfigure}[b]{0.49\linewidth}
        \centering
        \includegraphics[width=\linewidth]{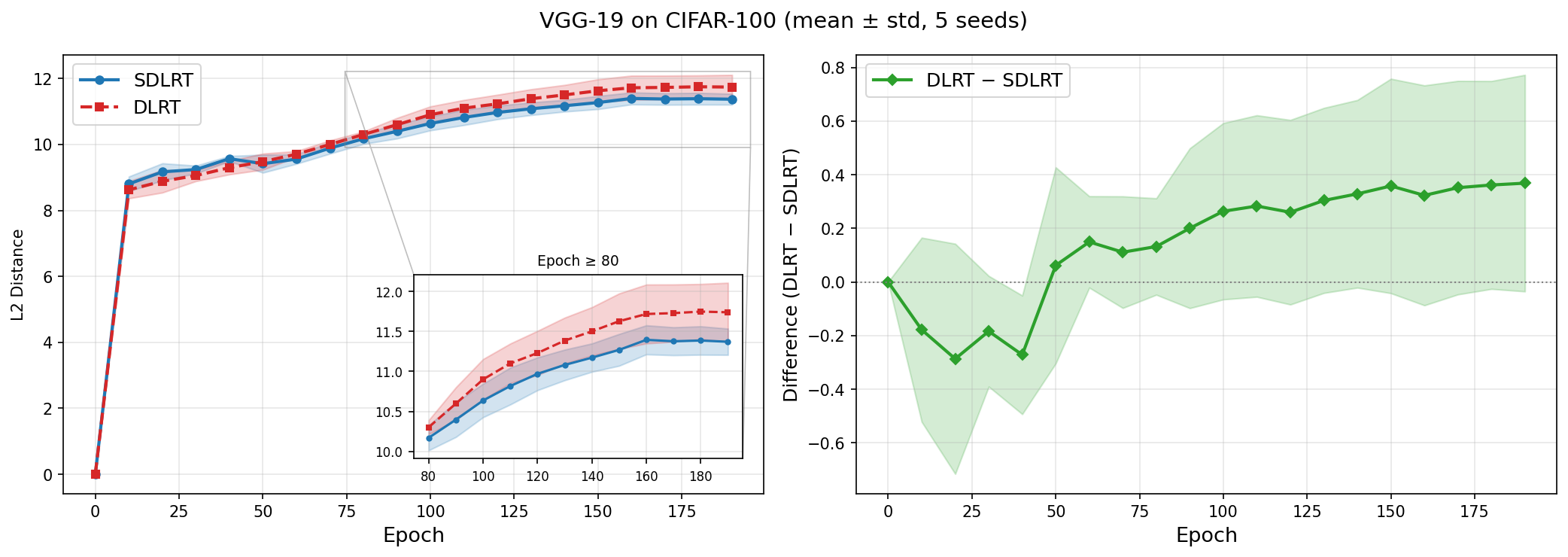}
        \caption{L2 Distance of VGG-19 on CIFAR-100}
        \label{fig_distance}
    \end{subfigure}
    \caption{On VGG-19/CIFAR-100, SDLRT can stably identify winning tickets under extreme compression rates, whereas DLRT cannot. Error bars represent the minimum and maximum across 5 samples.}
    \label{fig_cifar100_all}
\end{figure}

% \begin{figure}[t]
%   \centering
%   \includegraphics[width=\linewidth]{fig/fig_vgg_cifar100.jpg}
%   \caption{For complex tasks, SDLRT can stably identify winning tickets under extreme compression rates, whereas DLRT cannot. Error bars represent the minimum and maximum across 5 samples.}
%   \label{fig_cifar100}
% \end{figure}

% \begin{figure}[t]
%   \centering
%   \includegraphics[width=\linewidth]{svd_distance_errbar_CIFAR100.png}
%   \caption{Distance between the low-rank network weights and the baseline weights on VGG-19/CIFAR-100. Error bars represent the minimum and maximum across 5 samples.}
%   \label{fig_distance}
% \end{figure}

\section{Experiments}
\label{experiments}
We evaluate SDLRT from three complementary perspectives. Section~\ref{sec:exp_cv} examines its compression capability on image classification, showing that SDLRT stably identifies high-performing low-rank subnetworks even under extreme compression where existing methods collapse. Section~\ref{sec:exp_stability} investigates training stability by tracking the weight-space distance between low-rank subnetworks and their full-rank counterparts throughout optimization, revealing that the proposed compensation mechanism yields consistently closer and less variable trajectories. Section~\ref{sec:exp_peft} demonstrates the broad applicability of SDLRT by deploying it as a low-rank adapter for parameter-efficient fine-tuning of pretrained language models on SuperGLUE. The experiments are performed on an Nvidia RTX3080 and Nvidia RTX4090D 24GB. The code is available in the supplementary material.

\textbf{Notation. } A low-rank subnetwork is a tuple $(W, r)$, where $W$ represents the weights and $r$ is the specified rank, as defined before. We write $\mathrm{SVD}(W, r)$ for the best rank-$r$ approximation of $W$. Suppose that, starting from the same initialization $W_0$: (i) full-rank training produces $W_t$ at iteration $t$, and (ii) low-rank training initialized with $\mathrm{SVD}(W_0, r_0)$ produces factors $U_t, S_t, V_t$. The distance at iteration $t$ is $d_t = \|\mathrm{SVD}(W_t, r_t)-U_tS_tV_t^{\top}\|_F$, and the stability of a low-rank training method is measured by $d_T$ at convergence.

\subsection{Image Classification under Aggressive Compression}
\label{sec:exp_cv}
\textbf{Setup.} We evaluate on CIFAR-10 (10 classes) and CIFAR-100 (100 classes), each with 50k training and 10k test images. We employ VGG-16 and AlexNet for CIFAR-10, and VGG-19 for CIFAR-100. We compare SDLRT against three methods: (i) the Baseline network trained with standard SGD at full rank; (ii) DLRT; (iii) CondLR~\cite{savostianova2023robust}, a low-rank training method that imposes approximate orthogonality constraints. To isolate the source of improvement, we additionally report $\mathrm{SDLRT}_{\mathrm{2dim}}$, a variant that applies the compensation directly to the original factors $K$ and $L$ using $(K, U_{\mathrm{neg}})$ and $(L, V_{\mathrm{neg}})$. Each configuration is evaluated over 5 independent runs with different random seeds. We train for 120 epochs with a mini-batch size of 128, initial learning rate 0.05 decayed by a factor of 0.1 at epochs 60 and 100, with momentum 0.1, and set the $\tau\in\{0.1,0.2,0.25,0.3,0.4\}$, $\omega=0.8$.

\textbf{Results.} Since accuracy is largely preserved at low compression rates, we focus the analysis on the high-compression regime where performance differences become meaningful.
\textbf{SDLRT discovers higher-reward winning tickets.} Figure~\ref{fig_cifar10} shows test accuracy on CIFAR-10 as the compression rate increases. At moderate compression all methods perform comparably, and SDLRT's compensation overhead can even introduce mild overfitting. However, once the compression rate exceeds $\approx 80\%$, a clear separation emerges: SDLRT maintains accuracy above $90\%$ on VGG-16 and above $84\%$ on AlexNet, while DLRT and CondLR degrade substantially. The comparison between $\mathrm{SDLRT}_{\mathrm{2dim}}$ and the DLRT group under the same dimensional settings directly demonstrates the superiority of the proposed algorithm.
\textbf{SDLRT stably identifies winning tickets under extreme compression.} Figure~\ref{fig_cifar100} compares VGG-19 on CIFAR-100 at progressively higher compression rates. When compression exceeds $76.1\%$, DLRT may fail to identify trainable low-rank subnetworks: accuracy collapses and variance across seeds explodes. SDLRT, in contrast, continues to discover stable winning tickets by supplementing discarded singular value information during factor updates, demonstrating the effectiveness of the compensation mechanism.

\subsection{Training Stability of Low-Rank Subnetworks}
\label{sec:exp_stability}
Beyond final accuracy, a practically important question is \textit{how closely does a low-rank training trajectory track the full-rank solution throughout optimization?} A smaller and more consistent distance $d_t$ implies that the low-rank method faithfully approximates the expressive capacity of the full network, reducing the risk of divergent training dynamics.

\textbf{Setup.} Using the same model architectures and datasets as Section~\ref{sec:exp_cv} (VGG-19 on CIFAR-100), we set $\tau=0.4$ and record the distance $d_t$ between the low-rank subnetwork and the full-rank baseline at every epoch for both DLRT and SDLRT. We report the mean and standard deviation across 5 seeds.

\textbf{Results.} Figure~\ref{fig_distance} plots $d_t$ over the course of training. \textbf{SDLRT converges to a closer neighborhood of the full-rank solution.} At convergence, the distance $d_T$ of SDLRT is markedly smaller than that of DLRT. On VGG-19 at $\tau=0.4$, Figure~\ref{fig_distance} shows that the terminal distance is reduced by approximately $10\%$ relative to DLRT.

\begin{table}[t]
\centering
\caption{Results (\%) on fine-tuning DeBERTa-base with SuperGLUE datasets. }
\label{tab:superglue}
\resizebox{\textwidth}{!}{%
\begin{tabular}{lccccccc}
\toprule
Method & CB & COPA & WSC & RTE & WiC & BoolQ & Avg. \\
\midrule
SDLRT (ours) & $83.63_{\pm1.34}$ & ${91.00}_{\pm1.67}$ & $\mathbf{91.51}_{\pm1.13}$ & $86.52_{\pm0.59}$ & $\mathbf{73.93}_{\pm0.88}$ & $84.29_{\pm0.20}$ & $\mathbf{85.15}$ \\
DLRT & $81.85_{\pm1.75}$ & $\mathbf{91.17}_{\pm0.98}$ & $90.06_{\pm1.32}$ & $86.28_{\pm1.07}$ & $73.41_{\pm0.61}$ & $\textbf{84.33}_{\pm 0.54}$ & $84.52$ \\
LoRA & $\mathbf{85.72}_{\pm2.26}$ & $90.50_{\pm0.55}$ & $84.93_{\pm10.53}$ & $\mathbf{87.30}_{\pm0.93}$ & $73.07_{\pm0.81}$ & $84.14_{\pm0.21}$ & $84.28$ \\
AdaLoRA & $82.74_{\pm2.16}$ & $90.67_{\pm2.07}$ & $63.46_{\pm0.00}$ & $85.92_{\pm0.82}$ & $73.49_{\pm0.54}$ & $83.92_{\pm0.14}$ & $80.03$ \\
LoRA+ & $85.12_{\pm2.69}$ & $89.83_{\pm1.83}$ & $89.58_{\pm2.23}$ & $87.24_{\pm1.16}$ & $73.27_{\pm0.57}$ & $83.80_{\pm0.56}$ & $84.81$ \\
\bottomrule
\end{tabular}}
\end{table}

% \begin{figure}[t]
%   \centering
%   \includegraphics[width=\linewidth]{combined_boxplots_2x3.png}
%   \caption{Distribution of the validation accuracy on DeBERTa-v3-base across six SuperGLUE tasks over six random seeds. The diamond marker indicates the mean and the circular marker indicates outliers. The box shows the interquartile range ($25\%$--$75\%$ quantiles) and the median.}
%   \label{fig_superglue}
% \end{figure}
\subsection{Parameter-Efficient Fine-Tuning on SuperGLUE}
\label{sec:exp_peft}
To demonstrate that SDLRT extends naturally beyond from-scratch training, we apply it as a low-rank adapter for parameter-efficient fine-tuning (PEFT) of pretrained language models.

\textbf{Setup.} We fine-tune DeBERTa-v3-base~\cite{he2023deberta} on six SuperGLUE~\cite{wang2019superglue} tasks: BoolQ, CB, COPA, RTE, WiC, and WSC. These tasks span a broad spectrum of NLU capabilities, from surface-level matching to nuanced commonsense reasoning, providing a comprehensive testbed for adapter quality. We compare with three widely adopted low-rank PEFT methods: LoRA~\cite{hu2022lora}, which injects fixed-rank decomposition matrices into frozen pretrained weights; LoRA+~\cite{hayou2024loraplus}, which assigns different learning rates to the two low-rank factors; and AdaLoRA~\cite{zhang2023adalora}, which adaptively allocates rank budgets across layers via importance scoring. For all methods, low-rank adaptation is applied to the query, key and value projection matrices of every self-attention layer. LoRA, LoRA+ and AdaLoRA use an initial rank $r=10$, resulting in $\approx 1.145$M trainable parameters. DLRT and SDLRT start at $r=10$ with singular-value tolerance $\tau = 0.02$, yielding $\approx1.159$M and $\approx 1.177$M ($\approx2.8\%$ overhead as the upper bound) trainable parameters respectively. All experiments are repeated over 6 random seeds. Six tasks are conducted for $\{30, 20, 20, 20, 30, 30\}$ epochs with batch size 16, learning rate $6\times 10^{-4}$ with linear warm-up over the first $6\%$ of steps and weight decay 0.01; for LoRA+, we set the learning rate to $2\times 10^{-4}$ and the learning-rate ratio to 8.

\textbf{Results.} Table~\ref{tab:superglue} summarizes validation accuracy across six SuperGLUE tasks. As shown in Table~\ref{tab:superglue}, SDLRT achieves the highest average accuracy of $85.15\%$, with gains of at least $0.34\%$.

\section{Discussion and Limitations}
\label{limits}
This work first derives the gradient-flow equations for the SVD factorization of full-parameter and low-rank network weights based on the manifold's geometric properties, and identifies that the difference between the two is related to the interaction between the small singular-value components and the large ones. Based on this, we propose a training method that enhances the stability of low-rank networks. The proposed method does not increase the rank of the network under a given tolerance but incorporates information from small singular values, thereby maintaining the accuracy of low-rank networks even under extreme compression rates. Moreover, the compensatory low-rank training framework proposed in this paper may provide insights for other low-rank training methods under different regularization settings. However, we note that the negative-feedback factor $\omega$ is set heuristically and scaling SDLRT to large architectures remains future work.

% \textbf{Limitations.} (i) The theoretical analysis assumes distinct singular values ($\sigma_i^2 \ne \sigma_{r+j}^2$); behavior near spectral degeneracies is not addressed. (ii) Our experiments focus on vision classification and NLU; scaling SDLRT to larger architectures (e.g., LLMs with billions of parameters) remains future work. (iii) The negative-feedback factor $\omega$ is set heuristically; a principled adaptive schedule would be desirable.

% ---- Acknowledgments: hidden automatically in submission version ----
\begin{ack}
Anonymized for review. Funding sources and competing interests will be declared here in the camera-ready version.
\end{ack}

% ---- References ----
% \bibliographystyle{ieeetr}
\bibliographystyle{unsrtnat}
\bibliography{reference}

\begin{thebibliography}{53}
\providecommand{\natexlab}[1]{#1}
\providecommand{\url}[1]{\texttt{#1}}
\expandafter\ifx\csname urlstyle\endcsname\relax
  \providecommand{\doi}[1]{doi: #1}\else
  \providecommand{\doi}{doi: \begingroup \urlstyle{rm}\Url}\fi

\bibitem[Gural et~al.(2021)Gural, Nadeau, Tikekar, and Murmann]{gural2021lowrank}
Albert Gural, Phillip Nadeau, Mehul Tikekar, and Boris Murmann.
\newblock Low-rank training of deep neural networks for emerging memory technology, 2021.
\newblock URL \url{https://arxiv.org/abs/2009.03887}.

\bibitem[Schotth{\"{o}}fer et~al.(2022)Schotth{\"{o}}fer, Zangrando, Kusch, Ceruti, and Tudisco]{schotthofer2022low}
Steffen Schotth{\"{o}}fer, Emanuele Zangrando, Jonas Kusch, Gianluca Ceruti, and Francesco Tudisco.
\newblock Low-rank lottery tickets: finding efficient low-rank neural networks via matrix differential equations.
\newblock In \emph{NeurIPS}, 2022.

\bibitem[Frankle and Carbin(2019)]{frankle2019lottery}
Jonathan Frankle and Michael Carbin.
\newblock The lottery ticket hypothesis: Finding sparse, trainable neural networks.
\newblock In \emph{{ICLR}}. OpenReview.net, 2019.

\bibitem[You et~al.(2022)You, Li, Sun, Ouyang, and Lin]{you2022supertickets}
Haoran You, Baopu Li, Zhanyi Sun, Xu~Ouyang, and Yingyan Lin.
\newblock Supertickets: Drawing task-agnostic lottery tickets from supernets via jointly architecture searching and parameter pruning.
\newblock In \emph{{ECCV} {(11)}}, Lecture Notes in Computer Science, pages 674--690. Springer, 2022.

\bibitem[Frankle et~al.(2020{\natexlab{a}})Frankle, Dziugaite, Roy, and Carbin]{frankle2020linear}
Jonathan Frankle, Gintare~Karolina Dziugaite, Daniel~M. Roy, and Michael Carbin.
\newblock Linear mode connectivity and the lottery ticket hypothesis.
\newblock In \emph{{ICML}}, Proceedings of Machine Learning Research, pages 3259--3269. {PMLR}, 2020{\natexlab{a}}.

\bibitem[Frankle et~al.(2020{\natexlab{b}})Frankle, Dziugaite, Roy, and Carbin]{frankle2019stabilizing}
Jonathan Frankle, Gintare~Karolina Dziugaite, Daniel~M. Roy, and Michael Carbin.
\newblock Stabilizing the lottery ticket hypothesis, 2020{\natexlab{b}}.
\newblock URL \url{https://arxiv.org/abs/1903.01611}.

\bibitem[Koch and Lubich(2007)]{koch2007dynamical}
Othmar Koch and Christian Lubich.
\newblock Dynamical low-rank approximation.
\newblock \emph{{SIAM} J. Matrix Anal. Appl.}, 29\penalty0 (2):\penalty0 434--454, 2007.

\bibitem[Han et~al.(2016)Han, Mao, and Dally]{han2016deep}
Song Han, Huizi Mao, and William~J. Dally.
\newblock Deep compression: Compressing deep neural networks with pruning, trained quantization and huffman coding, 2016.
\newblock URL \url{https://arxiv.org/abs/1510.00149}.

\bibitem[Frantar and Alistarh(2023)]{frantar2023sparsegpt}
Elias Frantar and Dan Alistarh.
\newblock Sparsegpt: Massive language models can be accurately pruned in one-shot.
\newblock In \emph{{ICML}}, Proceedings of Machine Learning Research, pages 10323--10337. {PMLR}, 2023.

\bibitem[Sun et~al.(2024)Sun, Liu, Bair, and Kolter]{sun2024wanda}
Mingjie Sun, Zhuang Liu, Anna Bair, and J.~Zico Kolter.
\newblock A simple and effective pruning approach for large language models.
\newblock In \emph{{ICLR}}. OpenReview.net, 2024.

\bibitem[Fang et~al.(2023)Fang, Ma, Song, Mi, and Wang]{fang2023depgraph}
Gongfan Fang, Xinyin Ma, Mingli Song, Michael~Bi Mi, and Xinchao Wang.
\newblock Depgraph: Towards any structural pruning.
\newblock In \emph{{CVPR}}, pages 16091--16101. {IEEE}, 2023.

\bibitem[Ma et~al.(2023)Ma, Fang, and Wang]{ma2023llmpruner}
Xinyin Ma, Gongfan Fang, and Xinchao Wang.
\newblock Llm-pruner: On the structural pruning of large language models.
\newblock In \emph{NeurIPS}, 2023.

\bibitem[Ashkboos et~al.(2024)Ashkboos, Croci, Nascimento, Hoefler, and Hensman]{ashkboos2024slicegpt}
Saleh Ashkboos, Maximilian~L. Croci, Marcelo Gennari~Do Nascimento, Torsten Hoefler, and James Hensman.
\newblock {SliceGPT}: Compress large language models by deleting rows and columns.
\newblock In \emph{{ICLR}}. OpenReview.net, 2024.

\bibitem[He and Xiao(2024)]{he2024structured}
Yang He and Lingao Xiao.
\newblock Structured pruning for deep convolutional neural networks: {A} survey.
\newblock \emph{{IEEE} Trans. Pattern Anal. Mach. Intell.}, 46\penalty0 (5):\penalty0 2900--2919, 2024.

\bibitem[Liu et~al.(2019)Liu, Sun, Zhou, Huang, and Darrell]{liu2019rethinking}
Zhuang Liu, Mingjie Sun, Tinghui Zhou, Gao Huang, and Trevor Darrell.
\newblock Rethinking the value of network pruning.
\newblock In \emph{{ICLR} (Poster)}. OpenReview.net, 2019.

\bibitem[Renda et~al.(2020)Renda, Frankle, and Carbin]{renda2020comparing}
Alex Renda, Jonathan Frankle, and Michael Carbin.
\newblock Comparing rewinding and fine-tuning in neural network pruning.
\newblock In \emph{{ICLR}}. OpenReview.net, 2020.

\bibitem[Cheng et~al.(2024)Cheng, Zhang, and Shi]{cheng2024survey}
Hongrong Cheng, Miao Zhang, and Javen~Qinfeng Shi.
\newblock A survey on deep neural network pruning: Taxonomy, comparison, analysis, and recommendations.
\newblock \emph{{IEEE} Trans. Pattern Anal. Mach. Intell.}, 46\penalty0 (12):\penalty0 10558--10578, 2024.

\bibitem[Li et~al.(2023)Li, Yu, Zhang, Liang, He, Chen, and Zhao]{li2023losparse}
Yixiao Li, Yifan Yu, Qingru Zhang, Chen Liang, Pengcheng He, Weizhu Chen, and Tuo Zhao.
\newblock {LoSparse}: Structured compression of large language models based on low-rank and sparse approximation.
\newblock In \emph{{ICML}}, Proceedings of Machine Learning Research, pages 20336--20350. {PMLR}, 2023.

\bibitem[Zhang et~al.(2024)Zhang, Chen, Shen, Yang, Ou, Yu, and Zhuang]{zhang2023loraprune}
Mingyang Zhang, Hao Chen, Chunhua Shen, Zhen Yang, Linlin Ou, Xinyi Yu, and Bohan Zhuang.
\newblock {LoRAPrune}: Structured pruning meets low-rank parameter-efficient fine-tuning.
\newblock In \emph{{ACL} (Findings)}, Findings of {ACL}, pages 3013--3026. Association for Computational Linguistics, 2024.

\bibitem[Idelbayev and Carreira{-}Perpi{\~{n}}{\'{a}}n(2020)]{idelbayev2020lowrank}
Yerlan Idelbayev and Miguel~{\'{A}}. Carreira{-}Perpi{\~{n}}{\'{a}}n.
\newblock Low-rank compression of neural nets: Learning the rank of each layer.
\newblock In \emph{{CVPR}}, pages 8046--8056. Computer Vision Foundation / {IEEE}, 2020.

\bibitem[Morcos et~al.(2019)Morcos, Yu, Paganini, and Tian]{morcos2019one}
Ari~S. Morcos, Haonan Yu, Michela Paganini, and Yuandong Tian.
\newblock One ticket to win them all: generalizing lottery ticket initializations across datasets and optimizers.
\newblock In \emph{NeurIPS}, pages 4933--4943, 2019.

\bibitem[Chen et~al.(2021{\natexlab{a}})Chen, Frankle, Chang, Liu, Zhang, Carbin, and Wang]{chen2021lottery_cv}
Tianlong Chen, Jonathan Frankle, Shiyu Chang, Sijia Liu, Yang Zhang, Michael Carbin, and Zhangyang Wang.
\newblock The lottery tickets hypothesis for supervised and self-supervised pre-training in computer vision models.
\newblock In \emph{{CVPR}}, pages 16306--16316. Computer Vision Foundation / {IEEE}, 2021{\natexlab{a}}.

\bibitem[Chen et~al.(2020)Chen, Frankle, Chang, Liu, Zhang, Wang, and Carbin]{chen2020lottery_bert}
Tianlong Chen, Jonathan Frankle, Shiyu Chang, Sijia Liu, Yang Zhang, Zhangyang Wang, and Michael Carbin.
\newblock The lottery ticket hypothesis for pre-trained {BERT} networks.
\newblock In \emph{NeurIPS}, 2020.

\bibitem[Ramanujan et~al.(2020)Ramanujan, Wortsman, Kembhavi, Farhadi, and Rastegari]{ramanujan2020whats}
Vivek Ramanujan, Mitchell Wortsman, Aniruddha Kembhavi, Ali Farhadi, and Mohammad Rastegari.
\newblock What's hidden in a randomly weighted neural network?
\newblock In \emph{{CVPR}}, pages 11890--11899. Computer Vision Foundation / {IEEE}, 2020.

\bibitem[Malach et~al.(2020)Malach, Yehudai, Shalev{-}Shwartz, and Shamir]{malach2020proving}
Eran Malach, Gilad Yehudai, Shai Shalev{-}Shwartz, and Ohad Shamir.
\newblock Proving the lottery ticket hypothesis: Pruning is all you need.
\newblock In \emph{{ICML}}, Proceedings of Machine Learning Research, pages 6682--6691. {PMLR}, 2020.

\bibitem[Pensia et~al.(2020)Pensia, Rajput, Nagle, Vishwakarma, and Papailiopoulos]{pensia2020optimal}
Ankit Pensia, Shashank Rajput, Alliot Nagle, Harit Vishwakarma, and Dimitris~S. Papailiopoulos.
\newblock Optimal lottery tickets via subset sum: Logarithmic over-parameterization is sufficient.
\newblock In \emph{NeurIPS}, 2020.

\bibitem[da~Cunha et~al.(2022)da~Cunha, Natale, and Viennot]{dacunha2022proving}
Arthur da~Cunha, Emanuele Natale, and Laurent Viennot.
\newblock Proving the lottery ticket hypothesis for convolutional neural networks.
\newblock In \emph{{ICLR}}. OpenReview.net, 2022.

\bibitem[Chen et~al.(2021{\natexlab{b}})Chen, Sui, Chen, Zhang, and Wang]{chen2021unified}
Tianlong Chen, Yongduo Sui, Xuxi Chen, Aston Zhang, and Zhangyang Wang.
\newblock A unified lottery ticket hypothesis for graph neural networks.
\newblock In \emph{{ICML}}, Proceedings of Machine Learning Research, pages 1695--1706. {PMLR}, 2021{\natexlab{b}}.

\bibitem[Hui et~al.(2023)Hui, Yan, Ma, and Ku]{hui2023rethinking}
Bo~Hui, Da~Yan, Xiaolong Ma, and Wei{-}Shinn Ku.
\newblock Rethinking graph lottery tickets: Graph sparsity matters.
\newblock In \emph{{ICLR}}. OpenReview.net, 2023.

\bibitem[Chen et~al.(2021{\natexlab{c}})Chen, Zhang, Sui, and Chen]{chen2021gans}
Xuxi Chen, Zhenyu Zhang, Yongduo Sui, and Tianlong Chen.
\newblock Gans can play lottery tickets too.
\newblock In \emph{{ICLR}}. OpenReview.net, 2021{\natexlab{c}}.

\bibitem[Jiang et~al.(2023)Jiang, Hui, Liu, and Yan]{jiang2023successfully}
Chao Jiang, Bo~Hui, Bohan Liu, and Da~Yan.
\newblock Successfully applying lottery ticket hypothesis to diffusion model, 2023.
\newblock URL \url{https://arxiv.org/abs/2310.18823}.

\bibitem[Liu et~al.(2024)Liu, Zhang, He, Wang, Xiao, Ye, Zhou, Ku, and Hui]{liu2024survey_lth}
Bohan Liu, Zijie Zhang, Peixiong He, Zhensen Wang, Yang Xiao, Ruimeng Ye, Yang Zhou, Wei-Shinn Ku, and Bo~Hui.
\newblock A survey of lottery ticket hypothesis, 2024.
\newblock URL \url{https://arxiv.org/abs/2403.04861}.

\bibitem[Paul et~al.(2023)Paul, Chen, Larsen, Frankle, Ganguli, and Dziugaite]{paul2023unmasking}
Mansheej Paul, Feng Chen, Brett~W. Larsen, Jonathan Frankle, Surya Ganguli, and Gintare~Karolina Dziugaite.
\newblock Unmasking the lottery ticket hypothesis: What's encoded in a winning ticket's mask?
\newblock In \emph{{ICLR}}. OpenReview.net, 2023.

\bibitem[Zhang et~al.(2021)Zhang, Jin, Zhang, Zhou, Zhao, Ren, Liu, Wu, Jin, and Dou]{zhang2021validating}
Zeru Zhang, Jiayin Jin, Zijie Zhang, Yang Zhou, Xin Zhao, Jiaxiang Ren, Ji~Liu, Lingfei Wu, Ruoming Jin, and Dejing Dou.
\newblock Validating the lottery ticket hypothesis with inertial manifold theory.
\newblock In \emph{NeurIPS}, pages 30196--30210, 2021.

\bibitem[Ma et~al.(2021)Ma, Yuan, Shen, Chen, Chen, Chen, Liu, Qin, Liu, Wang, and Wang]{ma2021sanity}
Xiaolong Ma, Geng Yuan, Xuan Shen, Tianlong Chen, Xuxi Chen, Xiaohan Chen, Ning Liu, Minghai Qin, Sijia Liu, Zhangyang Wang, and Yanzhi Wang.
\newblock Sanity checks for lottery tickets: Does your winning ticket really win the jackpot?
\newblock In \emph{NeurIPS}, pages 12749--12760, 2021.

\bibitem[Burkholz(2022)]{burkholz2022convolutional}
Rebekka Burkholz.
\newblock Convolutional and residual networks provably contain lottery tickets.
\newblock In \emph{{ICML}}, Proceedings of Machine Learning Research, pages 2414--2433. {PMLR}, 2022.

\bibitem[Ceruti and Lubich(2022)]{ceruti2022unconventional}
Gianluca Ceruti and Christian Lubich.
\newblock An unconventional robust integrator for dynamical low-rank approximation.
\newblock \emph{BIT Numerical Mathematics}, 62:\penalty0 23--44, 2022.

\bibitem[Ceruti et~al.(2022)Ceruti, Kusch, and Lubich]{ceruti2022rankadaptive}
Gianluca Ceruti, Jonas Kusch, and Christian Lubich.
\newblock A rank-adaptive robust integrator for dynamical low-rank approximation.
\newblock \emph{BIT Numerical Mathematics}, 62:\penalty0 1149--1174, 2022.

\bibitem[Zangrando et~al.(2024)Zangrando, Schotth{\"{o}}fer, Ceruti, Kusch, and Tudisco]{zangrando2024rankadaptive}
Emanuele Zangrando, Steffen Schotth{\"{o}}fer, Gianluca Ceruti, Jonas Kusch, and Francesco Tudisco.
\newblock Geometry-aware training of factorized layers in tensor tucker format.
\newblock In \emph{NeurIPS}, 2024.

\bibitem[Arora et~al.(2019)Arora, Cohen, Hu, and Luo]{arora2019implicit}
Sanjeev Arora, Nadav Cohen, Wei Hu, and Yuping Luo.
\newblock Implicit regularization in deep matrix factorization.
\newblock In \emph{NeurIPS}, pages 7411--7422, 2019.

\bibitem[Galanti et~al.(2024)Galanti, Siegel, Gupte, and Poggio]{galanti2024sgd}
Tomer Galanti, Zachary~S. Siegel, Aparna Gupte, and Tomaso Poggio.
\newblock {SGD} and weight decay secretly minimize the rank of your neural network, 2024.
\newblock URL \url{https://arxiv.org/abs/2206.05794}.

\bibitem[Yang et~al.(2020)Yang, Tang, Wen, Yan, Hu, Li, Li, and Chen]{yang2020learning}
Huanrui Yang, Minxue Tang, Wei Wen, Feng Yan, Daniel Hu, Ang Li, Hai Li, and Yiran Chen.
\newblock Learning low-rank deep neural networks via singular vector orthogonality regularization and singular value sparsification.
\newblock In \emph{{CVPR} Workshops}, pages 2899--2908. Computer Vision Foundation / {IEEE}, 2020.

\bibitem[Zhang et~al.(2023)Zhang, Chen, Bukharin, He, Cheng, Chen, and Zhao]{zhang2023adalora}
Qingru Zhang, Minshuo Chen, Alexander Bukharin, Pengcheng He, Yu~Cheng, Weizhu Chen, and Tuo Zhao.
\newblock Adaptive budget allocation for parameter-efficient fine-tuning.
\newblock In \emph{{ICLR}}. OpenReview.net, 2023.

\bibitem[Khodak et~al.(2021)Khodak, Tenenholtz, Mackey, and Fusi]{khodak2021init}
Mikhail Khodak, Neil~A. Tenenholtz, Lester Mackey, and Nicol{\`{o}} Fusi.
\newblock Initialization and regularization of factorized neural layers.
\newblock In \emph{{ICLR}}. OpenReview.net, 2021.

\bibitem[Razin et~al.(2022)Razin, Maman, and Cohen]{razin2022implicit}
Noam Razin, Asaf Maman, and Nadav Cohen.
\newblock Implicit regularization in hierarchical tensor factorization and deep convolutional neural networks.
\newblock In \emph{{ICML}}, Proceedings of Machine Learning Research, pages 18422--18462. {PMLR}, 2022.

\bibitem[Feppon and Lermusiaux(2018)]{feppon2018geometric}
Florian Feppon and Pierre F.~J. Lermusiaux.
\newblock A geometric approach to dynamical model order reduction.
\newblock \emph{{SIAM} J. Matrix Anal. Appl.}, 39\penalty0 (1):\penalty0 510--538, 2018.

\bibitem[Zhao et~al.(2024)Zhao, Zhang, Chen, Wang, Anandkumar, and Tian]{zhao2024galore}
Jiawei Zhao, Zhenyu Zhang, Beidi Chen, Zhangyang Wang, Anima Anandkumar, and Yuandong Tian.
\newblock {GaLore}: Memory-efficient {LLM} training by gradient low-rank projection.
\newblock In \emph{{ICML}}, Proceedings of Machine Learning Research, pages 61121--61143. {PMLR} / OpenReview.net, 2024.

\bibitem[Savostianova et~al.(2023)Savostianova, Zangrando, Ceruti, and Tudisco]{savostianova2023robust}
Dayana Savostianova, Emanuele Zangrando, Gianluca Ceruti, and Francesco Tudisco.
\newblock Robust low-rank training via approximate orthonormal constraints.
\newblock In \emph{NeurIPS}, 2023.

\bibitem[He et~al.(2023)He, Gao, and Chen]{he2023deberta}
Pengcheng He, Jianfeng Gao, and Weizhu Chen.
\newblock {DeBERTaV3}: Improving {DeBERTa} using electra-style pre-training with gradient-disentangled embedding sharing.
\newblock In \emph{{ICLR}}. OpenReview.net, 2023.

\bibitem[Wang et~al.(2019)Wang, Pruksachatkun, Nangia, Singh, Michael, Hill, Levy, and Bowman]{wang2019superglue}
Alex Wang, Yada Pruksachatkun, Nikita Nangia, Amanpreet Singh, Julian Michael, Felix Hill, Omer Levy, and Samuel~R. Bowman.
\newblock Superglue: {A} stickier benchmark for general-purpose language understanding systems.
\newblock In \emph{NeurIPS}, pages 3261--3275, 2019.

\bibitem[Hu et~al.(2022)Hu, Shen, Wallis, Allen{-}Zhu, Li, Wang, Wang, and Chen]{hu2022lora}
Edward~J. Hu, Yelong Shen, Phillip Wallis, Zeyuan Allen{-}Zhu, Yuanzhi Li, Shean Wang, Lu~Wang, and Weizhu Chen.
\newblock {LoRA}: Low-rank adaptation of large language models.
\newblock In \emph{{ICLR}}. OpenReview.net, 2022.

\bibitem[Hayou et~al.(2024)Hayou, Ghosh, and Yu]{hayou2024loraplus}
Soufiane Hayou, Nikhil Ghosh, and Bin Yu.
\newblock Lora+: Efficient low rank adaptation of large models.
\newblock In \emph{{ICML}}, Proceedings of Machine Learning Research, pages 17783--17806. {PMLR} / OpenReview.net, 2024.

\bibitem[Hairer et~al.(1993)Hairer, N{\o}rsett, and Wanner]{hairer1993solving}
Ernst Hairer, Syvert~Paul N{\o}rsett, and Gerhard Wanner.
\newblock \emph{Solving Ordinary Differential Equations {I}: Nonstiff Problems}, volume~8 of \emph{Springer Series in Computational Mathematics}.
\newblock Springer-Verlag, Berlin, 2 edition, 1993.

\end{thebibliography}

% ---- Appendix (does not count towards 9-page limit) ----
\appendix

\section{Broader Impact Statement}
\label{broader statement}

This paper presents work whose main goal is to improve the stability and reliability of low-rank neural network training under aggressive compression, while maintaining mathematically sound guarantees of convergence and approximation. As in the majority of deep learning research, there are potential societal consequences of our work, none of which we feel must be specifically highlighted here. A positive societal impact is the reduced carbon emissions and energy consumption enabled by more efficient neural network training and inference through effective model compression. Regarding ethical aspects, we feel nothing has to be added.

\section{Proof of Theorem~\ref{thm:exactness} (Exactness)}
\label{app:exactness}

We show that, when the right-hand side $F(t, Y)=\dot{W}(t)$ corresponds to a solution $W(t)$ of exact rank $r$ on $[t_0, t_1]$, the SDLRT integrator reproduces $W(t_1)$ exactly. The argument adapts the exactness proof of the rank-adaptive integrator in~\cite{ceruti2022rankadaptive} to account for the additional compensation bases $U_{\mathrm{neg}}$ and $V_{\mathrm{neg}}$.

\begin{proof}
Let $W(t)=U(t)S(t)V(t)^{\top}$ with $\operatorname{rank}(W(t))=r$ for all $t\in[t_0, t_1]$, and set $Y_0=U_0 S_0 V_0^{\top}=W(t_0)$.

\paragraph{K-step and L-step.} The K-step ODE $\dot{K}=\dot{W}(t)\,V_0$ with $K(t_0)=U_0 S_0$ integrates to
\begin{equation}
\label{eq:K1}
K_1 = U_0 S_0 + (W(t_1)-W(t_0))V_0 = W(t_1)\,V_0.
\end{equation}
Analogously, $L_1 = W(t_1)^{\top} U_0$.

\paragraph{Compensated basis augmentation.} SDLRT forms the augmented matrices $(K_1, U_0, U_{\mathrm{neg}})$ and $(L_1, V_0, V_{\mathrm{neg}})$, and computes their QR factorizations to obtain orthonormal bases $\widehat{U}$ and $\widehat{V}$.

Since $W(t_1)=U(t_1)S(t_1)V(t_1)^{\top}$ with $S(t_1)$ non-singular and $V(t_1)^{\top}V_0$ invertible (by assumption), Equation~\eqref{eq:K1} gives $\operatorname{range}(K_1)=\operatorname{range}(U(t_1))=\operatorname{range}(W(t_1))$. Because $\operatorname{range}(W(t_1))\subseteq\operatorname{range}(K_1, U_0, U_{\mathrm{neg}})$, we have
\begin{equation}\label{eq:proj_U}
\widehat{U}\widehat{U}^{\top} W(t_1) = W(t_1).
\end{equation}
An identical argument yields $W(t_1)\widehat{V}\widehat{V}^{\top} = W(t_1)$. Note that the compensation columns $U_{\mathrm{neg}}$ (resp.\ $V_{\mathrm{neg}}$) only enlarge the column space of the augmented matrix; they therefore preserve inclusion~\eqref{eq:proj_U} and cannot compromise exactness.

\paragraph{S-step (Galerkin update).} Because $U_0$ is a sub-block of $(K_1, U_0, U_{\mathrm{neg}})$, we have $\widehat{U}\widehat{U}^{\top}U_0=U_0$ (likewise $\widehat{V}\widehat{V}^{\top}V_0=V_0$). Setting $\widehat{M}=\widehat{U}^{\top}U_0$ and $\widehat{N}=\widehat{V}^{\top}V_0$, the S-step initial value is
\begin{equation}
\widehat{S}(t_0) = \widehat{M}\,S_0\,\widehat{N}^{\top} = \widehat{U}^{\top}\,W(t_0)\,\widehat{V},
\end{equation}
so the augmented initial approximation satisfies $\widehat{U}\widehat{S}(t_0)\widehat{V}^{\top}=W(t_0)=Y_0$, confirming that the compensation does not alter the starting point.

Integrating the S-step ODE $\dot{\widehat{S}}=\widehat{U}^{\top}\dot{W}(t)\widehat{V}$ gives
\begin{equation}
\widehat{S}(t_1) = \widehat{U}^{\top}W(t_0)\widehat{V} + \widehat{U}^{\top}(W(t_1)-W(t_0))\widehat{V} = \widehat{U}^{\top}W(t_1)\widehat{V}.
\end{equation}
Combining with~\eqref{eq:proj_U} and its right counterpart:
\begin{equation}
\widehat{U}\,\widehat{S}(t_1)\,\widehat{V}^{\top} = \widehat{U}\widehat{U}^{\top}W(t_1)\widehat{V}\widehat{V}^{\top} = W(t_1).
\end{equation}

\paragraph{Truncation.} Since $\operatorname{rank}(W(t_1))=r$, the matrix $\widehat{S}(t_1)$ has exactly $r$ non-zero singular values. Choosing the tolerance $\vartheta$ ensures that truncation retains all $r$ components, yielding $Y_1 = U_1 S_1 V_1^{\top} = W(t_1)$.
\end{proof}

\section{Proof of Theorem~\ref{thm:error} (Error Bound)}
\label{app_error}

\paragraph{Setup and Assumptions.}
Let $W_f(t)$ denote the full-parameter weight evolved by the gradient flow
\[
\dot{W}_f(t)=\mathcal{F}(W_f(t)):=-\nabla_W\mathcal{L}(W_f(t)),\qquad W_f(0)\text{ given}.
\]
We make the following assumptions, identical to those of DLRT~\cite{schotthofer2022low}:
\begin{itemize}
  \item[(A1)] \emph{Boundedness and Lipschitz continuity.} There exist
        constants $B,L>0$ such that, for every $W,\widetilde{W}$,
        $\|\mathcal{F}(W)\|_F\le B$ and
        $\|\mathcal{F}(W)-\mathcal{F}(\widetilde W)\|_F\le L\|W-\widetilde W\|_F$.
  \item[(A2)] \emph{$\varepsilon$-closeness to the manifold.} For every
        $W\in\mathcal{M}_r$ in a neighborhood of $W_f(t)$ along the flow,
        $\|(I-\Pi_{\mathcal{T}(W)})\,\mathcal{F}(W)\|_F\le\varepsilon$.
  \item[(A3)] \emph{Initial projection error.} The SDLRT initialization
        satisfies $\|U(0)S(0)V(0)^\top-W_f(0)\|_F\le\delta$.
\end{itemize}

\begin{theorem}[Error Bound]\label{thm:err-bound-full}
Let $\widehat U_t,\widehat S_t,\widehat V_t$ denote the SDLRT iterates after
$n$ steps of size $\eta>0$, with singular-value truncation tolerance
$\vartheta>0$. Under assumptions (A1)--(A3), there exist constants
$c_0,c_1,c_2,c_3>0$ depending only on $L$, $B$ and the final time
$T=n\eta$ (in particular, independent of any singular value of the exact
or approximate solutions) such that
\[
\bigl\|\widehat U_t\widehat S_t\widehat V_t^{\!\top}-W_f(n\eta)\bigr\|_F
\;\le\;
c_0\,\delta+c_1\,\gamma\varepsilon+c_2\,\eta+c_3\,\vartheta/\eta.
\]
\end{theorem}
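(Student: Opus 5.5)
The plan follows the error analysis of the rank-adaptive BUG integrator in \cite{ceruti2022rankadaptive}, as already adapted to DLRT in \cite{schotthofer2022low}. The new ingredient is to show that the compensation buffer shrinks the per-step defect term by a factor $\gamma$. The bound will come from a local error estimate for one SDLRT step followed by a discrete Gronwall (Lady Windermere's fan) summation over $n$ steps; see \cite{hairer1993solving}.

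\textbf{Step 1 (one-step error).} Fix a step $[t_k,t_{k+1}]$ with $\eta=t_{k+1}-t_k$ and start from $Y_k=\widehat U_k\widehat S_k\widehat V_k^\top$. Let $W_k(t)$ be the exact flow $\dot W=\mathcal F(W)$ started at $Y_k$. As in the proof of Theorem~\ref{thm:exactness}, the $K$- and $L$-steps give $K_1=W_k(t_{k+1})V_0+O(\eta(\eta+\varepsilon))$. Then, since $\operatorname{range}(K_1,U_0,U_{\mathrm{neg}})$ contains $U_0$, the argument of \cite[Lemma~3]{ceruti2022rankadaptive} yields
\[
\|(I-\widehat U\widehat U^\top)W_k(t_{k+1})\|_F\le c\,\eta(\eta+\varepsilon_k),
\]
and similarly for $\widehat V$. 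Here $\varepsilon_k$ is the normal component of $\mathcal F$ that survives projection onto the augmented space. The $S$-step is a Galerkin projection onto $\widehat U\,\cdot\,\widehat V^\top$. Using (A1) and the fact that the augmented initial value reproduces $Y_k$ exactly (shown in Theorem~\ref{thm:exactness}), Gronwall on $[t_k,t_{k+1}]$ gives a pre-truncation local error of $c\,\eta(\eta+\varepsilon_k)$.

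\textbf{Step 2 (the factor $\gamma$).} This is the main obstacle. I would define
\[
\gamma=\sup_k\frac{\|(I-\Pi_{\widehat U})\mathcal F\,(I-\Pi_{\widehat V})\|_F}{\|(I-\Pi_{\mathcal T})\mathcal F\|_F}.
\]
The tangent-space residual is exactly $P_U^\perp\mathcal F P_V^\perp$. The augmented bases additionally contain $U_{\mathrm{neg}}$ and $V_{\mathrm{neg}}$, the top discarded singular directions, which by Theorem~\ref{thm:best-lowrank} carry the curvature-coupling components $u_{r+j}^\top\mathcal G v_i$ and $u_i^\top\mathcal G v_{r+j}$. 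The residual after augmentation is therefore the restriction of $P_U^\perp\mathcal F P_V^\perp$ to the orthogonal complement of a nonzero subspace, so $\varepsilon_k\le\gamma\varepsilon$ with $\gamma\le1$. Strict inequality requires the buffer to capture a nonzero part of the residual; I expect this to need a nondegeneracy assumption that I would state explicitly and treat as the defining property of $\gamma$.

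\textbf{Step 3 (truncation and accumulation).} Truncation at tolerance $\vartheta$ adds at most $\vartheta$ per step. The local error is then $\ell_k\le c(\eta^2+\eta\gamma\varepsilon)+\vartheta$. The exact flow is Lipschitz-stable, so $\|W(t)-\widetilde W(t)\|\le e^{L(t-s)}\|W(s)-\widetilde W(s)\|$. Summing propagated local errors over $n=T/\eta$ steps gives
\[
e^{LT}\delta+\sum_k e^{L(T-t_k)}\ell_k\le e^{LT}\delta+c\,e^{LT}T(\eta+\gamma\varepsilon)+e^{LT}T\vartheta/\eta,
\]
which yields $c_0,c_1,c_2,c_3$ depending only on $L,B,T$. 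The claimed gap $E_{\mathrm{DLRT}}-E_{\mathrm{SDLRT}}=(1-\gamma)c_1\varepsilon$ follows from comparing with the same bound at $\gamma=1$, since DLRT has no buffer and uses the same constants.
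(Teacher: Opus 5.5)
Your architecture matches the paper's: the DLRT/BUG basis-projection lemma for the augmented bases, a Galerkin $S$-step handled by Gronwall, $+\vartheta$ per truncation, and Lady Windermere's fan with $n=T/\eta$. The gap is in Step~2, the one genuinely new step. Your $\gamma$ is defined on the wrong quantity, so $\varepsilon_k\le\gamma\varepsilon$ does not follow.

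Carry out Step~1 explicitly with $R_k=P_{U_0}^\perp\mathcal F(Y_k)P_{V_0}^\perp$. The $K$-step captures $A_1\Pi_{V_0}$ up to $O(\eta^2)$. Moreover, $A_1(I-\Pi_{V_0})=\eta\,\Pi_{U_0}\mathcal F(Y_k)(I-\Pi_{V_0})+\eta R_k+O(\eta^2)$. Hence $(I-\Pi_{\widehat U})A_1=\eta\,(I-\Pi_{\widehat U})R_k+O(\eta^2)$, and symmetrically for $\widehat V$. So the error after the Galerkin step is governed by $R_k-\Pi_{\widehat U}R_k\Pi_{\widehat V}$, which is controlled by the one-sided residuals $(I-\Pi_{\widehat U})R_k$ and $R_k(I-\Pi_{\widehat V})$. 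Your ratio sees only the doubly projected block $(I-\Pi_{\widehat U})R_k(I-\Pi_{\widehat V})$.

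The argument ``restriction of $P_U^\perp\mathcal F P_V^\perp$ to the complement of a nonzero subspace'' treats $\{\widehat U S\widehat V^\top\}$ as if it were a tangent space. In fact its orthogonal complement is $\{X:\widehat U^\top X\widehat V=0\}$, with residual $X-\Pi_{\widehat U}X\Pi_{\widehat V}$. Concretely, take $R_k=uv^\top$ with $u\in\operatorname{range}(U_{\mathrm{neg}})$, $v\perp\operatorname{range}(\widehat V)$ and $\|v\|=1$. Your ratio is then $0$. But $Y_1v=0$, while $A_1v=\eta\,\mathcal F(Y_k)v+O(\eta^2)$ has the component $\eta u$. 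So $\|Y_1-A_1\|_F\ge\eta\|R_k\|_F-O(\eta^2)$, which contradicts the local bound $c\,\eta^2$ that your $\gamma=0$ would give.

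The repair is to define $\gamma$ on the one-sided quantities, e.g.\ $\gamma=\sup_k\max\{\|(I-\Pi_{\widehat U})R_k\|_F,\|R_k(I-\Pi_{\widehat V})\|_F\}/\|R_k\|_F$. The paper does essentially this. It postulates $\|(I-\widehat U\widehat U^{\top})A_1\|_F\le\gamma\|(I-\widetilde U\widetilde U^{\top})A_1\|_F\le\gamma\theta$, and the $V$ analogue, relative to the DLRT bases $\widetilde U=\operatorname{orth}(K_1,U)$, $\widetilde V=\operatorname{orth}(L_1,V)$. The rest of its argument (two-sided projection bound, Gronwall for the $S$-step, truncation, error propagation) then coincides with your Steps~1 and~3, and with that change your proof goes through.

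In neither version does $\gamma<1$ follow from $\operatorname{range}(\widetilde U)\subset\operatorname{range}(\widehat U)$, since monotonicity of projections gives only $\gamma\le 1$. Stating strictness as an explicit nondegeneracy hypothesis, as you propose, is therefore more careful than the paper's appeal to monotonicity. However, your appeal to Theorem~\ref{thm:best-lowrank} does not motivate that hypothesis. The entries $u_{r+j}^\top\mathcal G v_i$ and $u_i^\top\mathcal G v_{r+j}$ lie in the tangent blocks $U_\perp^\top\mathcal G V$ and $U^\top\mathcal G V_\perp$, not in the normal component $P_U^\perp\mathcal G P_V^\perp$ that $\gamma$ is supposed to shrink.
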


\begin{proof}
We adapt the four-lemma argument of DLRT to the SDLRT compensation
step. Throughout the proof, $A(t)$ denotes the exact local solution of
$\dot A=\mathcal F(A)$ on $[0,\eta]$ with starting value $Y_0=
U_0S_0V_0^{\!\top}$, and $A_1:=A(\eta)$.

\textbf{Step 1: Compensation gain.}
The augmented bases updated by DLRT and SDLRT respectively are:
\begin{equation}
\begin{cases}
\widetilde{U}\in \text{orth}(K_1,U),\quad \widetilde{V}\in \text{orth}(L_1,V) \\
\widehat{U}\in \text{orth}(K_1,U,U_{\text{neg}}),\quad \widehat{V}\in\text{orth}(L_1,V,V_{\text{neg}}).
\end{cases}
\end{equation}
Thus, there is a strict subspace inclusion relationship:
\begin{equation}
    \text{range}(\widetilde{U})\subset \text{range}(\widehat{U}),\quad \text{range}(\widetilde{V})\subset \text{range}(\widehat{V}).
\end{equation}
Therefore, regarding the monotonicity of the orthogonal projection (the error does not increase when projecting to a larger subspace), we have: $\exists \gamma\in(0.1)$
\begin{equation}
\label{eq:gamma-bound}
\begin{cases}
    \|(I-\widehat U\widehat U^{\!\top})A_1\|_F\le \gamma\|(I-\widetilde U\widetilde U^{\!\top})A_1\|_F\le\gamma\theta\\\|(I-\widehat V\widehat V^{\!\top})A_1\|_F\le \gamma\|(I-\widetilde V\widetilde V^{\!\top})A_1\|_F\le\gamma\theta
\end{cases}
\end{equation}
where $A_1$ is the time-$\eta$ value of the local exact flow with starting value $Y_0$, and $\theta$ is the DLRT projection bound
\[ \theta\;=\;BL\bigl(4e^{L\eta}+9\bigr)\eta^{2}+\bigl(3e^{L\eta}+4\bigr)\varepsilon\eta+e^{L\eta}\delta_{\text{loc}},\] with $\delta_{\text{loc}}$ the deviation between the localstarting iterate and the full-rank flow.
\textbf{Step 1: Two-sided compensated projection error.}
From \eqref{eq:gamma-bound} and the unit-norm property of orthogonal
projectors,
\begin{equation}
\label{step1}
\begin{array}{cl}
\bigl\|\widehat U\widehat U^{\!\top}A_1\widehat V\widehat V^{\!\top}-A_1\bigr\|_F
&\le\bigl\|(\widehat U\widehat U^{\!\top}A_1-A_1)\widehat V\widehat V^{\!\top}\bigr\|_F
+\bigl\|A_1\widehat V\widehat V^{\!\top}-A_1\bigr\|_F\\
&\le\|(I-\widehat U\widehat U^{\!\top})A_1\|_F\cdot\|\widehat V\widehat V^{\!\top}\|_2
   + \|(I-\widehat V\widehat V^{\!\top})A_1^{\!\top}\|_F\\
&\le 2\gamma\theta.
\end{array}\end{equation}
This is the SDLRT analogue of DLRT's Lemma~3 (Lemma~6.2 in the
appendix), with the bound \emph{strictly} reduced by the factor
$\gamma\in(0,1)$.
\textbf{Step 2: Auxiliary decomposition for the S-step.}
Define $\widetilde S(t):=\widehat U^{\!\top}A(t)\widehat V$ for
$t\in[0,\eta]$, and decompose
\[
A(t)\;=\;\widehat U\widetilde S(t)\widehat V^{\!\top}+\mathcal R(t),
\qquad
\mathcal R(t):=A(t)-\widehat U\widehat U^{\!\top}A(t)\widehat V\widehat V^{\!\top}.
\]
By (A1) and~\eqref{step1},
\begin{equation}
\label{step2_1}
 \|\mathcal R(t)\|_F
\le \|\mathcal R(t)-\mathcal R(\eta)\|_F+\|\mathcal R(\eta)\|_F
\le 2B\eta+2\gamma\theta.   
\end{equation}

Writing $\mathcal F(A(t))=\mathcal F(\widehat U\widetilde S(t)\widehat V^{\!\top})+\mathcal D(t)$
with defect $\mathcal D(t)$, the Lipschitz bound (A1) yields
\begin{equation}
\label{step2_2}
\|\mathcal D(t)\|_F\le L\,\|\mathcal R(t)\|_F\le 2L\bigl(B\eta+\gamma\theta\bigr).
\end{equation}
\textbf{Step 3: Galerkin S-step error via Gronwall.}
The SDLRT $S$-step ODE
\[
\dot{\widehat S}(t)=\widehat U^{\!\top}\mathcal F(\widehat U\widehat S(t)\widehat V^{\!\top})\widehat V,
\qquad
\widehat S(0)=\widehat U^{\!\top}Y_0\widehat V,
\]
and the auxiliary equation
\[
\dot{\widetilde S}(t)=\widehat U^{\!\top}\mathcal F(\widehat U\widetilde S(t)\widehat V^{\!\top})\widehat V
                   +\widehat U^{\!\top}\mathcal D(t)\widehat V,
\qquad
\widetilde S(0)=\widehat U^{\!\top}Y_0\widehat V,
\]
share the same initial value (since $A_0=Y_0$ locally). The Gronwall
inequality combined with~\eqref{step2_2} gives
\begin{equation}
\label{step3}
\bigl\|\widehat S(\eta)-\widehat U^{\!\top}A_1\widehat V\bigr\|_F
\le\int_0^\eta e^{L(\eta-s)}\|\mathcal D(s)\|_F\,ds
\le 2Le^{L\eta}\bigl(B\eta+\gamma\theta\bigr)\eta. 
\end{equation}
\textbf{Step 4: Local error of one SDLRT step.}
Setting $Y_1:=\widehat U\widehat S(\eta)\widehat V^{\!\top}$ and combining~\eqref{step1} with~\eqref{step3},
\begin{equation}
\label{step4_1}
\begin{array}{cl}
\|Y_1-A_1\|_F
&\le \bigl\|\widehat U\widehat S(\eta)\widehat V^{\!\top}-\widehat U\widehat U^{\!\top}A_1\widehat V\widehat V^{\!\top}\bigr\|_F
   +\bigl\|\widehat U\widehat U^{\!\top}A_1\widehat V\widehat V^{\!\top}-A_1\bigr\|_F\\
&\le \bigl\|\widehat S(\eta)-\widehat U^{\!\top}A_1\widehat V\bigr\|_F+2\gamma\theta\\
&\le 2Le^{L\eta}B\eta^{2}+2Le^{L\eta}\gamma\theta\eta+2\gamma\theta.
\end{array}
\end{equation}
Using the explicit form of $\theta$ (with $\delta_{\text{loc}}=0$ for the per-step analysis),
\[
\gamma\theta=\gamma\bigl[BL(4e^{L\eta}+9)\eta^{2}+(3e^{L\eta}+4)\varepsilon\eta\bigr],
\]
and absorbing all $L,B$-dependent factors into constants
$\widehat c_1,\widehat c_2$ that depend only on $L,B,\eta$, the bound~\eqref{step4_1} simplifies to
\begin{equation}
\label{step4_2}
\|Y_1-A_1\|_F \;\le\; \widehat c_1\,\gamma\varepsilon\,\eta+\widehat c_2\,\eta^{2}.   
\end{equation}
\textbf{Step 5: Truncation step.}
After the rank-adaptive truncation in SDLRT, the discarded tail
contributes at most $\vartheta$ in Frobenius norm. Writing
$Y_1^{\mathrm{trunc}}=\widehat U_1\widehat S_1\widehat V_1^{\!\top}$ and
applying the triangle inequality to~\eqref{step4_2},
\begin{equation}
\label{step5}
\|Y_1^{\mathrm{trunc}}-A_1\|_F
\;\le\;\widehat c_1\,\gamma\varepsilon\,\eta+\widehat c_2\,\eta^{2}+\vartheta.
\end{equation}
\textbf{Step 6: Global error via Lady Windermere's fan.}
By Assumption (A1), the flow map of $\dot W=\mathcal F(W)$ is
$e^{L\eta}$-Lipschitz over each step. Applying the standard
error-propagation argument~\cite[Sec.\ II.3]{hairer1993solving} to the per-step error~\eqref{step5} over
$n$ steps with $T=n\eta$,
\begin{equation}
\begin{aligned}
\bigl\|\widehat U_t\widehat S_t\widehat V_t^{\!\top}-W_f(n\eta)\bigr\|_F
&\le e^{LT}\,\delta
   +\sum_{k=0}^{n-1}e^{L(n-k-1)\eta}\bigl(\widehat c_1\gamma\varepsilon\,\eta+\widehat c_2\,\eta^{2}+\vartheta\bigr)\\
&\le e^{LT}\,\delta
   +n\,e^{LT}\bigl(\widehat c_1\gamma\varepsilon\,\eta+\widehat c_2\,\eta^{2}+\vartheta\bigr)\\
&= e^{LT}\,\delta
   +e^{LT}\widehat c_1\,T\,\gamma\varepsilon
   +e^{LT}\widehat c_2\,T\,\eta
   +e^{LT}\,T\,\vartheta/\eta.
\end{aligned}
\end{equation}
Setting
\[
c_0=e^{LT},\qquad
c_1=e^{LT}\widehat c_1\,T,\qquad
c_2=e^{LT}\widehat c_2\,T,\qquad
c_3=e^{LT}\,T,
\]
we obtain
\[
\bigl\|\widehat U_t\widehat S_t\widehat V_t^{\!\top}-W_f(n\eta)\bigr\|_F
\;\le\;
c_0\,\delta+c_1\,\gamma\varepsilon+c_2\,\eta+c_3\,\vartheta/\eta,
\]
where $c_0,c_1,c_2,c_3$ depend only on $L,B,T$ and are independent of
the singular values of any iterate. \qed
\end{proof}

\section{Additional experiments}
\subsection{Additional experiments for the distribution of Table~\ref{tab:superglue}}

Figure~\ref{fig_superglue} shows the distribution of the validation accuracy from Table~\ref{tab:superglue} as a supplement to the results in section~\ref{sec:exp_peft}.
\begin{figure}[t]
  \centering
  \includegraphics[width=0.95\linewidth]{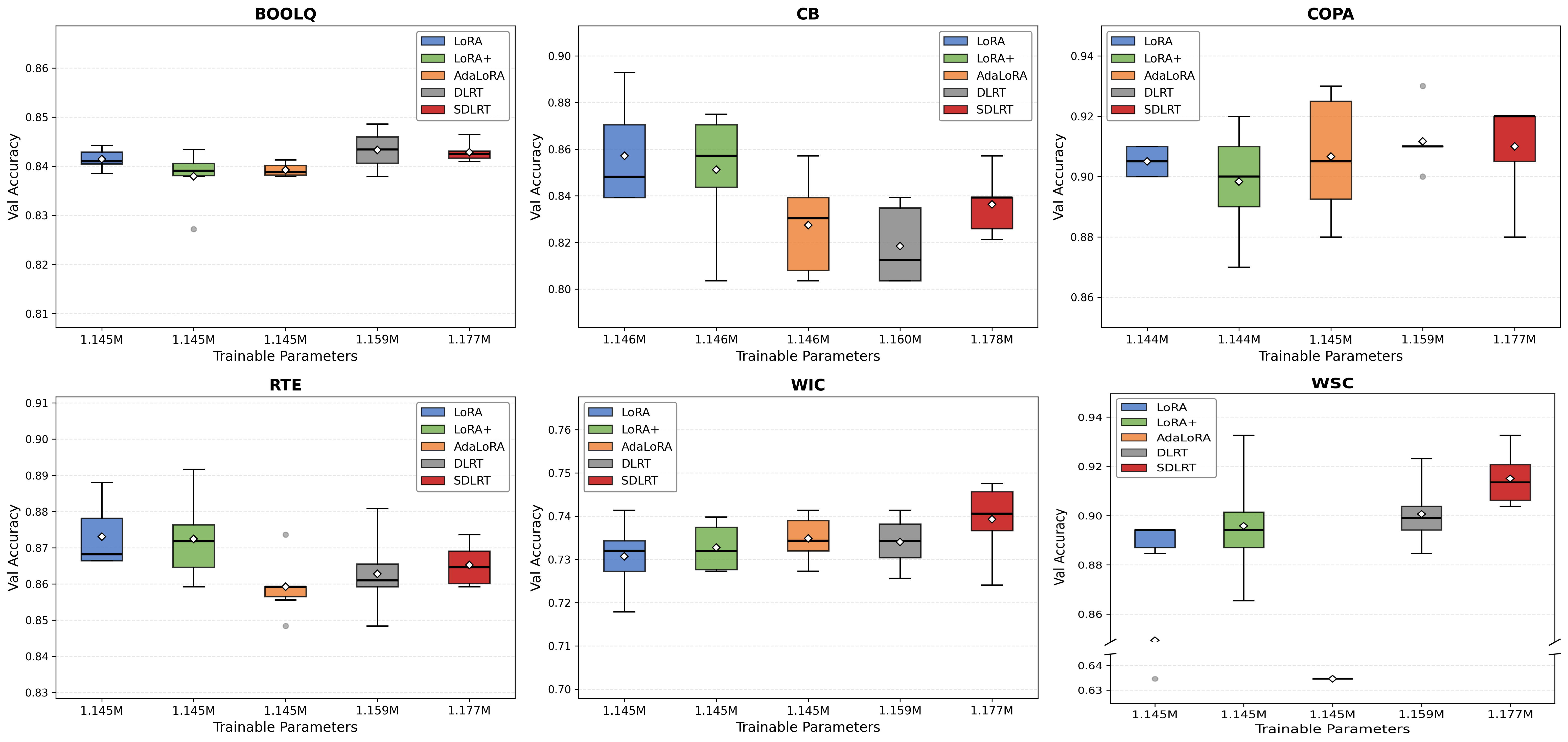}
  \caption{Distribution of the validation accuracy on DeBERTa-v3-base across six SuperGLUE tasks over six random seeds. The diamond marker indicates the mean and the circular marker indicates outliers. The box shows the interquartile range ($25\%$--$75\%$ quantiles) and the median.}
  \label{fig_superglue}
\end{figure}
% \subsection{Additional experiments for the comparison of DLRT and SDLRT on SuperGLUE}

% In addition to the comparison shown in the main text, Figure~\ref{fig_comparison} presents the performance of dlrt and sdlrt as LoRA-like adapters in fine-tuning.

% \begin{figure}[t]
%   \centering
%   \includegraphics[width=0.85\linewidth]{boxplot_sdlrt_vs_dlrt.png}
%   \caption{Comparison of DLRT and SDLRT with DeBERTa-v3-base. SDLRT outperforms DLRT almost on every task}
%   \label{fig_comparison}
% \end{figure}
\subsection{Additional experiments for the memory and time cost}

In Table~\ref{tab:mem_time}, we report the memory and time usage on VGG-19/CIFAR-100 setting $\text{seed}=30$, $\tau=0.45$ and batch size 128. As in Section~\ref{sec:exp_cv}, the training is running on an NVIDIA RTX3080.

\begin{table}[t]
\centering
\caption{Comparison of memory and time usage.}
\label{tab:mem_time}
\resizebox{0.8\textwidth}{!}{%
\begin{tabular}{lccccccc}
\toprule
Method & Memory & Training Time (Average per epoch) & Test Accuracy$(\%)$ \\
\midrule
DLRT & $2680\text{MB}$ & $56.88$s & $53.87\%$\\
SDLRT & $2710\text{MB}$ & $57.53$s & $61.86\%$ \\
\bottomrule
\end{tabular}}
\end{table}

\end{document}